\def\eqref#1{equation~\ref{#1}}
\def\1{\bm{1}}
\DeclareMathAlphabet{\mathsfit}{\encodingdefault}{\sfdefault}{m}{sl}
\SetMathAlphabet{\mathsfit}{bold}{\encodingdefault}{\sfdefault}{bx}{n}
\title{MI-NeRF: Learning a Single Face NeRF from Multiple Identities}
\author{Aggelina Chatziagapi \\
Stony Brook University \\
\texttt{aggelina@cs.stonybrook.edu} \\
\And
Grigorios G. Chrysos \\
University of Wisconsin-Madison \\
\texttt{chrysos@wisc.edu} \\
\And
Dimitris Samaras \\
Stony Brook University \\
\texttt{samaras@cs.stonybrook.edu}
}
\newcommand{\MethodName}{MI-NeRF\xspace}
\newtheorem{proposition}{Proposition}
\crefname{proposition}{Prop.}{Prop.}
\crefname{claim}{Claim}{Claims}
\crefname{lemma}{Lemma}{Lemmas}
\crefname{equation}{Eq.}{Eq.}
\crefname{section}{Sec.}{Sec.}
\crefname{table}{Table}{Tables}
\crefname{figure}{Fig.}{Fig.}
\crefname{appendix}{Appendix}{Appendices}
\providecommand{\realnum}{\ensuremath{\mathbb{R}}}
\providecommand{\bmcal}[1]{\ensuremath{\bm{\mathcal{#1}}}}
\providecommand{\matnot}[1]{_{({#1})}}  % % matrix notation shortcut.
\newcommand{\myth}{\ensuremath{^{\text{th}}}}
\begin{document}

\maketitle

\begin{abstract}
In this work, we introduce a method that learns a single dynamic neural radiance field (NeRF) from monocular talking face videos of multiple identities. 
NeRFs have shown remarkable results in modeling the 4D dynamics and appearance of human faces.
However, they require per-identity optimization. Although recent approaches have proposed techniques to reduce the training and rendering time, increasing the number of identities can be expensive.
We introduce \MethodName (multi-identity NeRF), a single unified network that models complex non-rigid facial motion for multiple identities, using only monocular videos of arbitrary length. The core premise in our method is to learn the non-linear interactions between identity and non-identity specific information with a multiplicative module.
By training on multiple videos simultaneously, \MethodName not only reduces the total training time compared to standard single-identity NeRFs, but also demonstrates robustness in synthesizing novel expressions for any input identity.
We present results for both facial expression transfer and talking face video synthesis. Our method can be further personalized for a target identity given only a short video.
Project page: \url{https://aggelinacha.github.io/MI-NeRF/}.
\end{abstract}

\section{Introduction}

Capturing the 4D dynamics and appearance of non-rigid motion of humans has long been a challenge for both computer vision and graphics. This task has broad applications, ranging from AR/VR and video games to virtual communication and the movie industry, all of which require the creation of photorealistic videos of the human face. Earlier approaches relied on 3D morphable models (3DMM)~\citep{garrido2015vdub,garrido2014automatic,thies2016face2face}, while later methods have turned to generative adversarial networks (GANs)~\citep{kim2018deepvideo,pumarola2020ganimation,wav2lip,vougioukas2020realistic}. GANs learn representations of facial dynamics from large datasets, containing video clips from multiple identities. To disentangle latent factors of variation, such as identity and expression, some works have imposed multilinear structures~\citep{sahasrabudhe2019lifting,wang2019adversarial,georgopoulos2020multilinear}, building on the ideas of TensorFaces~\citep{vasilescu2002multilinear}.
Despite their success, most GANs operate in the 2D image space and they do not model the 3D face geometry. 

Neural radiance fields (NeRF) have recently demonstrated photorealistic 3D modeling of both static~\citep{mildenhall2020nerf,barron2021mipnerf,barron2022mipnerf360,lindell2022bacon} and dynamic scenes~\citep{pumarola2021d,li2021neural,li2022neural,nerfies,nerface,park2021hypernerf,weng2022humannerf}, making them a popular choice for modeling human faces from monocular videos~\citep{nerface,nerfies,park2021hypernerf,rignerf}.
Approaches that leverage a 3DMM prior and condition on expression parameters enable control of facial expressions, for applications such as expression transfer~\citep{nerface} and lip syncing~\citep{lipnerf}. Despite their high-quality results, NeRFs require expensive per-scene or per-identity optimization.
% to capture scene-specific and identity-specific information.
Recent works~\citep{zielonka2023insta,bakedavatar,wang2024gaussianhead,qian2023gaussianavatars} propose techniques to reduce the training and rendering times. However, increasing the number of identities to hundreds can be expensive, since we would need one model per identity with corresponding learnable parameters.
A few works learn more generic representations~\citep{wang2021ibrnet,chen2021mvsnerf,trevithick2021grf,yu2021pixelnerf,kwon2021neural,mu2023actorsnerf,hong2022headnerf}, but they require static settings and/or multiple input views during training.

In this work, we propose \MethodName (multi-identity NeRF), a novel method that learns a single dynamic NeRF from monocular talking face videos of multiple identities.
% generalizes dynamic NeRFs to multiple identities.
Using only a \emph{single} network, it learns to model complex non-rigid human face motion, while disentangling identity and non-identity specific information. At the epicenter of our approach lies a multiplicative module that approximates the non-linear interactions between latent factors of variation, inspired by ideas that go back to TensorFaces~\citep{vasilescu2002multilinear}.
% We introduce two possible versions of this module, 
This module learns a non-linear mapping of identity codes and facial expressions, based on the Hadamard product. To the best of our knowledge, this is the first method that learns a single unified face NeRF from monocular videos of multiple identities.

% Our simple architecture can effectively disentangle facial expressions from identity-specific information.
% Since \MethodName is simultaneously trained on videos from multiple identities, it
Trained on multiple videos simultaneously, \MethodName significantly reduces the training time, compared to multiple standard single-identity NeRFs, by up to $90\%$, leading to a sublinear cost curve. Further personalization for a target identity requires only a few iterations and leads to a performance on par with the state-of-the-art for facial expression transfer and audio-driven talking face video synthesis. Leveraging information from multiple identities, \MethodName demonstrates significant robustness in synthesizing novel (unseen) expressions for an input subject. It also works for very short video clips of only a few seconds length. We intend to release the source code upon acceptance of the paper.
% We additionally show that our method can be extended to other types of deformation, such as human body motion. 

In brief, our contributions are as follows:

\begin{itemize}
    \item We introduce \MethodName, a novel method that learns a single dynamic NeRF from monocular talking face videos of multiple identities.
    \item We propose a multiplicative module to learn non-linear interactions between identity and non-identity specific information. We present two specific parameterizations for this module and provide their technical derivations. 
    % \grig{De moy aresei kai poly ayto: Genika emeis proteinoyme to multiplicative module. Deixnoyme 2 instatiations ayteinoy, oxi oti yparxoyn mono ayta. Diladi kaneis undersell edw. }
    % learns non-linear interactions between latent factors of variation, 
    % \item We achieve up to $90\%$ decrease in training time, while demonstrating state-of-the-art performance for facial expression transfer, talking face video synthesis, as well as human body motion synthesis.
    \item Our generic model can be further personalized for a target identity, achieving state-of-the-art performance for facial expression transfer and talking face video synthesis, requiring only a fraction of the total training time of standard single-identity NeRFs.
\end{itemize}

\section{Related Work}

% \grig{Mipws to related work na paei se appendix opws einai? To exw ksanakanei se monostila papers, kai den enoxlise kapoion. Eidika edw poy tha einai akrivws apo katw. Mporoyme na to doyme analoga an exoyme xwro.}
\noindent
\textbf{Human Portrait Video Synthesis.}
Earlier approaches for video synthesis and editing of human faces are based on 3DMMs~\citep{garrido2015vdub,garrido2014automatic,thies2016face2face}. A 3DMM~\citep{blanz1999morphable} is a parametric model that represents a face as a linear combination of principle axes for shape, texture, and expression, learned by principal component analysis (PCA). 
% These works follow an optimization procedure to get the corresponding parameters for a source and target identity, and then synthesize or reenact them.
GAN-based networks have been later proposed for video synthesis~\citep{kim2018deepvideo,fomm,pumarola2020ganimation}, as well as for audio-driven talking faces~\citep{wav2lip,pcavs,vougioukas2020realistic}. GANs are trained on large datasets with video clips from multiple identities, learning diverse facial expressions and lip movements. However, they operate in a low resolution 2D image space and they cannot model the 3D face geometry. NeRFs~\citep{mildenhall2020nerf} have recently become very popular, since they can represent the 3D face geometry and appearance, and generate photorealistic videos~\citep{nerface,adnerf,nerfies,park2021hypernerf}. Subsequent works~\citep{zielonka2023insta,bakedavatar,qian2023gaussianavatars} propose techniques to reduce the training and inference time.

\noindent
\textbf{Multilinear Factor Analysis of Faces.} Factors of variation, such as identity, expression, and illumination, affect the appearance of a human face in a portrait video. Disentangling those factors is challenging. Techniques like PCA can only find a single mode of variation~\citep{turk1991eigenfaces}. TensorFaces~\citep{vasilescu2002multilinear} is an early approach that approximates different modes of variation using a multilinear tensor decomposition.
% extension of the matrix singular value decomposition (SVD) to tensors. 
Inspired by this, several works have proposed to learn multiplicative interactions to disentangle latent factors of variation~\citep{vlasic2006face, tang2013tensor,wang2017learning}. Multilinear latent conditioning has also been proved beneficial for GANs and VAEs, in order to disentangle and edit face attributes~\citep{sahasrabudhe2019lifting,wang2019adversarial,georgopoulos2020multilinear,chrysos2021conditional}. In this work, we propose a multiplicative module that conditions a NeRF and approximates the non-linear interactions between identity and non-identity specific information.
% \grig{Nomizw oti kai se ayto kai sto apo panw prepei na yparxei mia protasi poy na leei giati den kalyptoyn tin diki mas doyleia, dld oti leitoyrgoyn se mikrotra resolution edw px.}

% \begin{table}[t]
% \begin{center}
% % \begin{tabular}{|l|@{}c@{}|@{}c@{}|@{}c@{}|}  % % tight, no space added.
% \begin{tabular}{|@{\hspace{0.15mm}}l|@{\hspace{0.45mm}}c@{\hspace{0.45mm}}|@{\hspace{0.45mm}}c@{\hspace{0.45mm}}|@{\hspace{0.45mm}}c@{\hspace{0.45mm}}|}
%     \hline
% \textbf{Method} & \textbf{HD Quality} & \textbf{Multiple Identities} & \textbf{Short Clips}\\
% \hline
% \hline
%     GAN & \xmark & \cmark & \cmark \\
%     \hline
%     NeRF & \cmark & \xmark & \xmark \\
%     \hline
%     Ours & \cmark & \cmark & \cmark \\
%     \hline
% \end{tabular}
% \end{center}
% \vspace{-10pt}
% \caption{Current GAN-based approaches for video synthesis of human faces are trained on large datasets of short video clips from multiple identities. In contrast, NeRFs are identity-specific and require long videos for training. Our method combines the HD quality of NeRFs and can be trained on videos from multiple identities of any length.}
% \label{tab:intro_comp}
% \vspace{-10pt}
% \end{table}

\noindent
\textbf{Neural Radiance Fields. }
Implicit neural representations for modeling 3D scenes have recently gained a lot of attention. 
% They are usually based on a deep neural network that maps input coordinates to a signal value, such as signed distance~\citepp{yariv2020idr,wang2021neus,chatziagapi2021sider} or occupancy~\citep{mescheder2019occupancy}. 
In particular, NeRFs~\citep{mildenhall2020nerf,barron2021mipnerf,barron2022mipnerf360,lindell2022bacon} have shown photorealistic novel view synthesis of complex scenes. 
% success in modeling complex scenes and synthesizing photorealistic novel views.
They represent a static scene as a continuous 5D function, using a multilayer perceptron (MLP) that maps each 5D coordinate (3D spatial location and 2D viewing direction) to an RGB color and volume density.
% Using a fully differentiable rendering procedure based on classical volume rendering techniques, the MLP is efficiently optimized to represent the scene. Subsequent works~\citep{barron2021mipnerf,barron2022mipnerf360,lindell2022bacon} propose multiscale representations, attempting to improve NeRF's output visual quality. 
However, NeRFs require expensive per-scene or per-identity optimization.
% In other words, they need to ``overfit'' to a specific 3D scene, in order to capture scene-specific fine details.
A few works have proposed to learn generic representations~\citep{wang2021ibrnet,chen2021mvsnerf,trevithick2021grf,yu2021pixelnerf}, but these require static settings and multiple views as input. In contrast, in this work, we are interested in dynamic human faces, captured from monocular videos.

\noindent
\textbf{Dynamic Neural Radiance Fields for Human Faces. }
Several works have extended NeRFs to dynamic scenes~\citep{pumarola2021d,li2021neural,li2022neural,nerfies,nerface,park2021hypernerf,weng2022humannerf}. They usually map the sampled 3D points from an observation space to a canonical space, in order to learn a time-invariant scene representation. Some additionally learn time-variant latent codes~\citep{nerface,li2022neural,lipnerf}. Particularly challenging is to capture the 4D dynamics and appearance of the non-rigid deformations of the human face from monocular videos.
Conditioning a NeRF on 3DMM expression parameters, related works enable explicit and meaningful control of the synthesized subject~\citep{nerface,adnerf,rignerf,athar2021flame,lipnerf,zielonka2023insta,bakedavatar}.
% The learned representation can be then used for applications, like expression transfer~\citep{nerface} and lip-syncing~\citep{lipnerf}.
Although these approaches can produce HD quality results, they are identity-specific and usually require long (more than a few seconds) videos for training.
% have to ``overfit'' to a single identity
Only a limited prior work has aimed to train a generic NeRF for human faces~\citep{raj2021pixel,hong2022headnerf,zhuang2022mofanerf}. However, all of these models require multiple views for training. In contrast, we propose a simple architecture that is capable of learning multiple identities from monocular videos of arbitrary length captured in the wild.

% \noindent
% \textbf{Generic Neural Radiance Fields for Humans.}
% Only a limited number of prior works have aimed to train a generic NeRF for human modeling~\citep{raj2021pixel,hong2022headnerf,zhuang2022mofanerf,kwon2021neural,mu2023actorsnerf}. However, all of these models require multiple views for training.
% Both HeadNeRF~\citep{hong2022headnerf} and MoFaNeRF~\citep{zhuang2022mofanerf} are parametric models, and thus limited to approximating a target subject based on the available training distributions. 
% that rely on complex convolutional networks to enhance their output quality. They are limited to approximating a target subject based on the available training distributions. In contrast, we propose a simple architecture that is capable of learning multiple identities from monocular videos captured in the wild, and can synthesize high-fidelity animations for them.

\begin{figure*}[t]
\begin{center}
% \fbox{\rule{0pt}{2in} \rule{0.9\linewidth}{0pt}}
   \includegraphics[width=\linewidth]{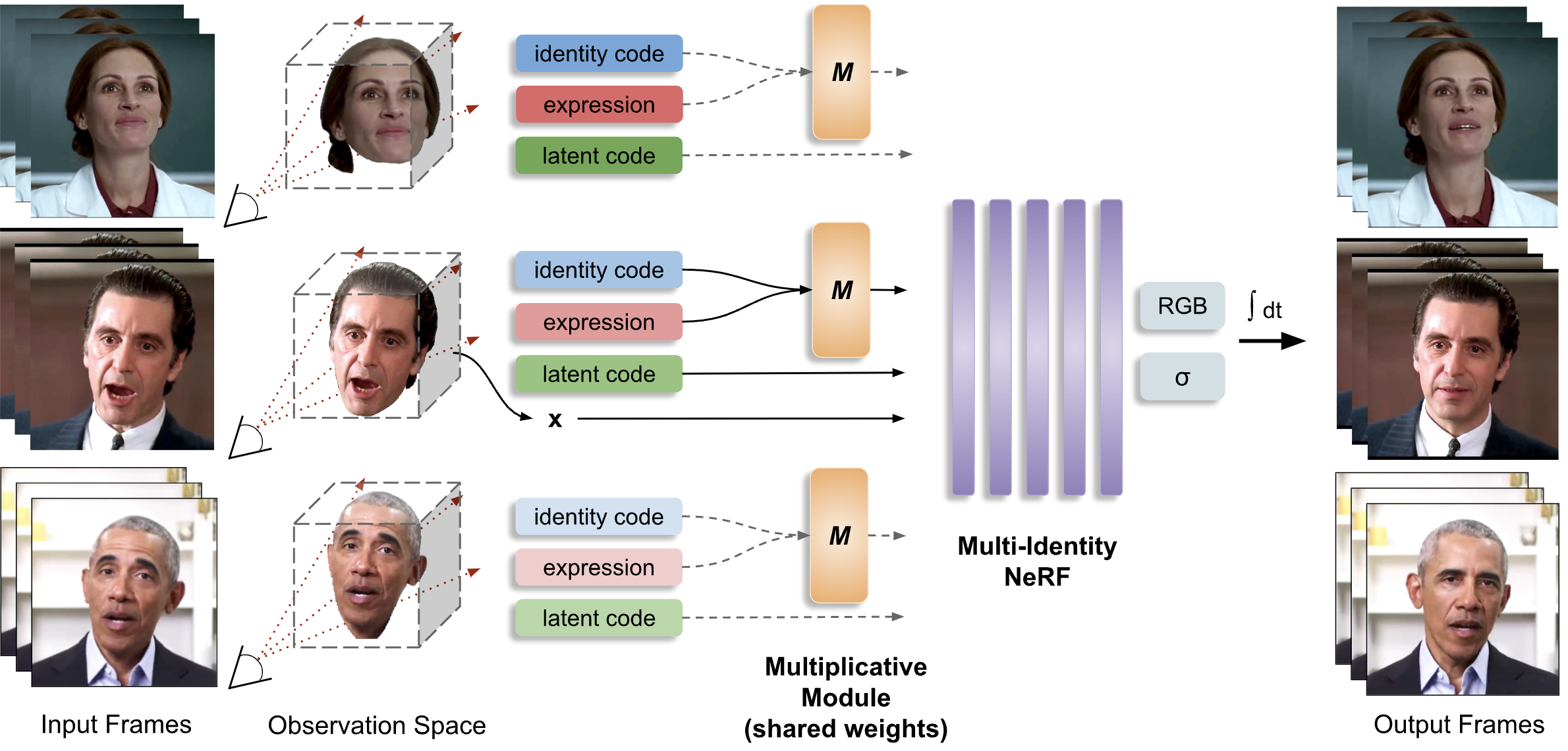}
\end{center}
\vspace{-8pt}
   \caption{\textbf{Overview of \MethodName.} Given monocular talking face videos from multiple identities, \MethodName learns a \emph{single} network to model their 4D geometry and appearance. A multiplicative module with shared weights across all identities learns non-linear interactions between identity codes and facial expressions. \MethodName can synthesize high-quality videos of any input identity.}
\label{fig:diagram}
\vspace{-10pt}
\end{figure*}

\section{Method}

% \textbf{Notation}: \grig{An de soy aresei, tin kanoyme remove.} Vectors (matrices) are symbolized by lowercase (uppercase) boldface letters, e.g., $\bm{w}$ ($\bm{W}$). The Hadamard product $*$ is an element-wise operator. The parameters of a neural network ...

% \subsection{Overview} 
% \grig{Do we need a subsection for this or use this as the intro for this section?}

We present \MethodName, a novel method that learns a single dynamic NeRF from monocular talking face videos of multiple identities. An overview of our approach is illustrated in Fig.~\ref{fig:diagram}. Given RGB videos of different subjects, we learn a \emph{single} unified network that represents their 4D facial geometry and appearance. 
A multiplicative module approximates the non-linear interactions between learned identity codes and facial expressions, in order to disentangle identity and non-identity specific information. Its output, along with learned per-frame latent codes, condition a dynamic NeRF.
With this simple architecture, MI-NeRF enables training a NeRF on a large number of human faces, reducing the total training time of standard single-identity NeRFs, and achieving high-quality video synthesis for any input subject.

\subsection{Conditional Input}\label{sec:input}

\noindent
\textbf{Head Pose and Expression.}
For each video frame of an identity, we fit a 3DMM and extract the corresponding head pose $\bm{P} \in \mathbb{R}^{4\times 4}$ and expression parameters $\bm{e} \in \mathbb{R}^{\text{79}}$. 
% A 3DMM~\citep{blanz1999morphable} is a parametric model that can represent a face as a linear combination of principle axes for shape, texture, and expression, learned by principal components analysis (PCA).
% We use the learned 3DMM from~\citep{guo2018cnn}, which uses the Basel Face Model (BFM) \grig{Xrisimopoieitai poythena ayto to abbreviation sto ypoloipo paper? An oxi, as mi to orisoyme.}~\citep{paysan20093d} for shape and texture and the FaceWarehouse~\citep{cao2013facewarehouse} for expression. 
We follow an optimization-based method, that minimizes an objective function with photo-consistency and landmark terms~\citep{guo2018cnn}. We use the learned axes from~\cite{guo2018cnn}, based on the Basel Face Model~\citep{paysan20093d} for shape and texture and the FaceWarehouse~\citep{cao2013facewarehouse} for expression. The extracted head pose $\bm{P} = \left[\bm{R}; \textbf{t}\right]$ is used to transform the sampled 3D points to the canonical space before shooting the rays, where $\bm{R} \in \mathbb{R}^{3\times 3}$ and $\textbf{t}~\in~\mathbb{R}^{3\times 1}$ are the rotation and translation matrices correspondingly. 

\noindent
\textbf{Learned Identity and Latent Codes. }
In addition to the expression vectors, the dynamic NeRF is conditioned on learned identity and latent codes. Both are randomly initialized embeddings that are learned during training. We use one identity code $\bm{i} \in \mathbb{R}^{\text{79}}$ per video~\footnote{In our preliminary experiments, we found that defining the identity vector with the same dimension as the expression vector was sufficient.}, in order to capture \emph{time-invariant} information. These codes appear to mainly capture the identity, and thus
% (see their interpolation result in Sec.~\ref{sec:disentanglement}). Thus,
we call them identity codes. We also learn one latent code $\bm{l} \in \mathbb{R}^{\text{32}}$ per frame per video, in order to capture \emph{time-varying} information. These latent codes memorize very small per-frame variations, such as appearance and illumination, that are independent of facial expressions, but necessary to reconstruct them in the output videos~\citep{nerface,lipnerf}.

 % In order to disentangle identity and non-identity specific information, MI-NeRF learns a non-linear mapping of the identity codes and facial expressions, using a multiplicative module based on the Hadamard product. The weights of this module are shared across all different identities.

\subsection{Proposed Modules}\label{sec:M_and_H}

% The appearance of a human face in a portrait video is affected by several factors of variation, such as identity, expression, and illumination. In order to disentangle these latent factors, earlier works have proposed to learn a multilinear tensor decomposition of the data~\citep{wang1,wang2}. Inspired by this, we propose a module that captures the \textit{multiplicative interactions} between facial expressions and identity-specific codes. Using the Hadamard (element-wise) product, 

We propose to learn \textit{multiplicative interactions} between facial expressions and identity codes using the Hadamard product. Earlier works have used multilinear tensor decomposition to disentangle latent factors of variation of the human face~\citep{vasilescu2002multilinear,wang2019adversarial,georgopoulos2020multilinear}. Inspired by this, we learn a non-linear mapping that disentangles identity and non-identity specific information. This mapping is learned by a single multiplicative module $M$ with shared weights for all identities. We also introduce a variation of this module, $H$, that captures high-degree interactions. Please see the appendix for the detailed derivation.

\subsubsection{Multiplicative Interaction Module}

% \grig{De moy aresei poly to onoma A kai B. Mporoyme na ta poyme alliws? To mono poy moy erxetai kata noy einai: multiplicative module, kai Recursive module (i higher-degree module), alla isws yparxei kai kalyteri ekfrasi.}

Given an expression vector $\bm{e} \in \mathbb{R}^{d}$ and an identity vector $\bm{i} \in \mathbb{R}^{d}$, our multiplicative module $M$ learns the following mapping:
\begin{equation}\label{eq:multA}
    M(\bm{e}, \bm{i}) = \bm{C} \left[ \left(\bm{U}_1 \bm{e} \right) * \left(\bm{U}_2 \bm{i} \right) \right] + \bm{W}_{2} \bm{e} + \bm{W}_{3} \bm{i}\;, 
\end{equation}
where $*$ denotes the Hadamard (element-wise) product that correlates $\bm{e}$ and $\bm{i}$, $\bm{U}_{1} \in \mathbb{R}^{k\times d}$, $\bm{U}_{2} \in \mathbb{R}^{k\times d}$, $\bm{C} \in \mathbb{R}^{d\times k}$, $\bm{W}_{2} \in \mathbb{R}^{d\times d}$, $\bm{W}_{3} \in \mathbb{R}^{d\times d}$ are learnable parameters, and $d = 79$ for our case. We chose $k < d$ to get low rank matrices, with fewer parameters. We experimentally found that this mapping $M$ with $k = 8$ leads to the best disentanglement between identity and expression with the least number of parameters (see ablation study in Sec.~\ref{sec:ablation} for more details). \cref{prop:mi_nerf_multiplicative_module} verifies the multiplicative interactions learned; its proof exists in \cref{ssec:mi_nerf_app_proof_proposition_multiplicative}.

\begin{proposition}
    The function $M$ of \cref{eq:multA} captures multiplicative interactions.
    \label{prop:mi_nerf_multiplicative_module}
\end{proposition}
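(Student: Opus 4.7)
The plan is to unpack the Hadamard product in \cref{eq:multA} coordinate-wise and exhibit the explicit bilinear cross-terms $e_j i_k$ that appear in each component of $M(\bm{e},\bm{i})$. Making those cross-terms visible is what formalizes the informal statement that $M$ "captures multiplicative interactions," so the strategy is essentially an explicit expansion followed by a comparison against the additive baseline $f(\bm{e})+g(\bm{i})$.

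First, I would write the $m$-th coordinate of the Hadamard term as
$$\bigl[(\bm{U}_1\bm{e}) * (\bm{U}_2\bm{i})\bigr]_m \;=\; \Bigl(\sum_{j} (U_1)_{mj} e_j\Bigr)\Bigl(\sum_{k} (U_2)_{mk} i_k\Bigr) \;=\; \sum_{j,k} (U_1)_{mj}(U_2)_{mk}\, e_j\, i_k,$$
which is manifestly a bilinear form in $(\bm{e},\bm{i})$. Applying $\bm{C}$ then gives, for the $p$-th output coordinate,
$$\bigl[\bm{C}\bigl((\bm{U}_1\bm{e})*(\bm{U}_2\bm{i})\bigr)\bigr]_p \;=\; \sum_{j,k}\Bigl(\sum_{m} C_{pm}(U_1)_{mj}(U_2)_{mk}\Bigr) e_j i_k \;=\; \bm{e}^{\!\top} \bm{B}^{(p)} \bm{i},$$
where $\bm{B}^{(p)}\in\mathbb{R}^{d\times d}$ has entries $B^{(p)}_{jk}=\sum_m C_{pm}(U_1)_{mj}(U_2)_{mk}$. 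The remaining summands $\bm{W}_2\bm{e}$ and $\bm{W}_3\bm{i}$ are first-order in a single argument and therefore contribute no cross-terms.

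Second, I would formalize "multiplicative interaction" by the standard criterion that $M$ cannot be written as $f(\bm{e})+g(\bm{i})$, equivalently that the mixed Hessian does not vanish. Directly from the expansion above,
$$\frac{\partial^2 M_p(\bm{e},\bm{i})}{\partial e_j\,\partial i_k} \;=\; \sum_{m} C_{pm}(U_1)_{mj}(U_2)_{mk} \;=\; B^{(p)}_{jk},$$
which is nonzero for generic choices of the learnable parameters $\bm{C},\bm{U}_1,\bm{U}_2$. Since any additively separable function has vanishing mixed partials, $M$ is not additively separable and therefore encodes genuine second-order interactions between identity and expression; the rank of the bilinear form is moreover bounded by $k$, matching the low-rank parameterization used in the paper.

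The main obstacle is not computational but definitional: pinning down precisely what "multiplicative interaction" means so that the claim is non-vacuous. I would adopt the convention from the multiplicative-interactions literature (nontrivial bilinear form in the expansion, or equivalently a nonvanishing mixed Hessian) and then the proof reduces to the routine algebraic expansion sketched above, with the only care needed being to note that genericity of the parameters guarantees the coefficients $B^{(p)}_{jk}$ are not all zero.
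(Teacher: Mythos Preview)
Your argument is correct, but it proceeds in the opposite direction from the paper's. The paper starts from the \emph{general} bilinear map $M^{\dagger}(\bm{e},\bm{i})=\bmcal{W}\times_2\bm{e}\times_3\bm{i}+\bm{W}_2\bm{e}+\bm{W}_3\bm{i}$ with a full third-order tensor $\bmcal{W}$, applies a rank-$k$ CP decomposition to $\bmcal{W}$, and uses the mixed product property (Khatri-Rao to Hadamard) to recover precisely the form of \cref{eq:multA}; thus $M$ is exhibited as a low-rank instance of the canonical bilinear interaction. You instead start from $M$ and expand coordinate-wise to display the bilinear coefficients $B^{(p)}_{jk}=\sum_m C_{pm}(U_1)_{mj}(U_2)_{mk}$, which is exactly the mode-$1$ unfolding of that same CP decomposition read in reverse. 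Your route is more elementary (no tensor machinery) and has the merit of supplying an explicit, non-vacuous definition of ``multiplicative interaction'' via the mixed Hessian, which the paper leaves implicit. The paper's route, on the other hand, makes the rank-$k$ structure and the connection to the multilinear-decomposition literature (TensorFaces, CP) immediate, and explains why variant (A7) in the ablation is the unfactored parent of $M$.
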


\subsubsection{High-degree Interaction Module}
We can extend the multiplicative interaction module further, in order to capture high-degree interactions. Instead of directly multiplying the embeddings of the expression and the identity vector, we can find a common embedding space, add their features together and then perform multiplications. The formula of this module for the expression $\bm{e}$ and identity $\bm{i}$ vectors is the following:
% Our second proposed version of the multiplicative module is based on the following recursive formula, inspired by~\cite{chrysos2021conditional}. It learns to capture higher-order auto- and cross-correlations between the expression $\bm{e}$ and identity $\bm{i}$ vectors:
\begin{equation}\label{eq:multB}
    H(\bm{e}, \bm{i}) = \bm{C}\bm{x}_{N}\;, \text{ where }
    \bm{x}_{n}  = \bm{x}_{n - 1} + \left(\bm{U}_{(n, 1)} \bm{e} + \bm{U}_{(n, 2)} \bm{i}\right) * \bm{x}_{n - 1}\;,
\end{equation}
for $n = 2, \dots, N$, with $\bm{x}_{1}  = \bm{U}_{(1, 1)} \bm{e} + \bm{U}_{(1, 2)} \bm{i}$. The parameters $\bm{U}_{(n, 1)} \in \mathbb{R}^{k\times d}$, $\bm{U}_{(n, 2)} \in \mathbb{R}^{k\times d}$, $\bm{C} \in \mathbb{R}^{o\times k}$ are learnable for $n = 1, \dots, N$.
% In practice, we chose $d = k = o = 79$, and $N=2$ for our experiments, but capturing high-degree interactions might be beneficial in other cases. 
In practice, we choose $d = k = o = 79$ for our experiments. Using our proposed module $H$ with $N = 2$ leads to similar performance with our proposed module $M$, and better performance compared to increasing $N$ to larger values (see ablation study in Sec.~\ref{sec:ablation}). However, we believe that capturing higher-degree interactions might be beneficial in other cases.
\cref{prop:mi_nerf_CCP_two-var_module_second-degree} and~\cref{prop:mi_nerf_CCP_two-var_module_high-degree} in~\cref{ssec:mi_nerf_app_proof_proposition_high-degree} demonstrate the interactions learned in this case.

% We found that this mapping $M_{B}$ leads to an improvement over $M_{A}$ in the final visual quality of the synthesized videos, since it learns multiple possible combinations between all the elements of the $\bm{e}$ and $\bm{i}$ vectors.

\subsection{Dynamic NeRF}

To model the dynamics of human faces, we learn a single dynamic NeRF for \emph{all} input identities.
% , a single network for all input identities.
% conditioned on the dynamically changing facial expressions and head pose.
For each video frame of a subject, we first segment the head using an automatic parsing method~\citep{lee2020maskgan}, similarly with~\citet{nerface,adnerf,lipnerf}. Then, we learn an implicit representation $F_{\Theta}$ of the identities using an MLP. Given an identity $\bm{i}$ at a specific video frame, shown from a particular viewpoint and with a particular facial expression, we first march camera rays through the scene and sample 3D points on these rays. For a 3D point location $\bm{x}$, a viewing direction $\bm{v}$, the estimated expression vector $\bm{e}$, and a learned latent vector $\bm{l}$, $F_{\Theta}$ predicts the RGB color $\bm{c}$ and density $\sigma$ of the point:
\begin{equation}\label{eq:nerf}
    F_{\Theta}: (M(\bm{e}, \bm{i}),  \bm{l}, \bm{x}, \bm{v}) \longrightarrow (\bm{c}, \sigma)\;.
\end{equation}
\noindent
where $M(\bm{e}, \bm{i})$ can also be replaced with $H(\bm{e}, \bm{i})$ (see \cref{sec:M_and_H}).
Given the predicted color $\bm{c}$ and density $\sigma$ for every point on each ray, we can produce the final video frame applying volumetric rendering~\citep{mildenhall2020nerf}. For each camera ray $\bm{r}(t) = \bm{o} + t\bm{v}$ with camera center $\bm{o}$ and viewing direction $\bm{v}$, the color $C$ of the corresponding pixel can be computed by accumulating the predicted colors and densities of the sampled points along the ray:
\begin{equation}
    C(\bm{r};\Theta) = \int_{t_n}^{t_f} \sigma (\bm{r}(t)) \bm{c} (\bm{r}(t), \bm{v}) T(t) dt\;, 
\end{equation}
\noindent where $t_n$ and $t_f$ are the near and far bounds correspondingly, and $T(t) = \exp \left( - \int_{t_n}^{t} \sigma (\bm{r}(s)) ds \right)$ is the accumulated transmittance along the ray from $t_n$ to $t$. 
% We denote $\textbf{c}_$ and $\sigma_{\Theta}$ the outputs of $F_{\Theta}$, omitting the input conditions for short.

Similarly to NeRF~\citep{mildenhall2020nerf}, we follow a hierarchical sampling strategy, optimizing a coarse and a fine model. During training, we minimize the following objective function:
\begin{equation}
    \mathcal{L} = \mathcal{L}_{c} + \lambda_{l}\mathcal{L}_{l} + \lambda_{i}\mathcal{L}_{i} \;, \text{where } \mathcal{L}_{c} = \sum_{\bm{r}} \left\| \hat{C}(\bm{r};\Theta) - C(\bm{r}) \right\|_{2}^{2}
\end{equation}
%
% where $\mathcal{L}_{c} = \sum_{\bm{r}} \left\| \hat{C}(\bm{r};\Theta) - C(\bm{r}) \right\|_{2}^{2}$
is the photo-consistency loss that measures the pixel-level difference between the ground truth color $C(\bm{r})$ and the predicted color $\hat{C}(\bm{r};\Theta)$ for all the rays $\bm{r}$,
% \begin{equation}
%     \mathcal{L}_{photo} = \sum_{\textbf{r}} \left\| \hat{C}(\textbf{r};\Theta) - C(\textbf{r};\Theta) \right\|_{2}^{2}
% \end{equation} 
$\mathcal{L}_{l} = \left\| \bm{l} \right\|_{2}$ and $\mathcal{L}_{i} = \left\| \bm{i} \right\|_{2}$ regularize the latent and identity vectors correspondingly,
% \begin{equation}
%     \mathcal{L}_{latent} = \left\| \textbf{v} \right\|_{2}
% \end{equation}
$\lambda_{l} = 0.01$ and $\lambda_{i} = 10^{-4}$.
% is set to 0.01.

\noindent
\textbf{Implementation Details. }
We use an MLP of 8 linear layers with a hidden size of 256 and ReLU activations, with branches for $\bm{c}$ and $\sigma$, positional encodings for $\bm{x}$ and $\bm{v}$ of 10 and 4 frequencies respectively, and Adam optimizer~\citep{kingma2014adam} with a learning rate of $5\times10^{-4}$ that decays exponentially to $5\times10^{-5}$ (see appendix~\ref{sec:4} for more details).

\subsection{Personalization}\label{sec:personalization}

Trained on multiple identities simultaneously, \MethodName learns a large variety of facial expressions from diverse human faces and can synthesize videos of any training identity. To enhance the visual quality for a particular seen subject, i.e.~to better capture their facial details, such as wrinkles, we can further improve the output appearance using a short video of this subject. We call this procedure ``personalization''. More specifically, we fine-tune the network with a small learning rate ($10^{-5}$) for only a few iterations, keeping the weights of the multiplicative module frozen. This idea can also be used to adapt \MethodName to an unseen identity (that is not part of the initial training set). Given only a few frames of an unseen subject, we can fine-tune our network to learn their identity and latent codes. Then, we can synthesize high-quality videos of them, given novel expressions as input.

% To adapt to a novel identity

% This technique can also be used to enhance the visual quality for a seen identity, by optimizing the output appearance of the NeRF on a short video, in order to better capture facial details of a particular subject. In contrast to the standard single-identity NeRF, the pre-trained multiplicative module has learned a large variety of expressions from diverse human faces, allowing plausible transfer of novel expressions for the target identity. 

% To enhance the visual quality / or to adapt to a novel identity

\section{Experiments}

\subsection{Dataset and Evaluation}\label{sec:dataset}

\textbf{Dataset.} To evaluate our proposed method, we collected 140 talking face videos of different identities from publicly available datasets~\citep{adnerf,lu2021live,lipnerf,hazirbas2021towards,ginosar2019learning,ahuja2020style,Duarte_CVPR2021,hdtf,wang2021facevid2vid}. We included a variety of standard front-facing videos (e.g.~political speeches), as well as more challenging videos, with large variations in head pose, lighting, and expressiveness (e.g. movies and news satire television programs). The videos are in HD quality (720p) and of around 20 seconds to 5 minutes duration. For each video, we run the 3DMM fitting procedure, as described in Sec.~\ref{sec:input}. We use 100 videos for our training set and we keep the rest as novel identities. We keep the last 10\% of the frames of the training videos as our test set. More details of our dataset collection are included in the appendix.
% supplementary document.
% \grig{To dataset creation mporei na paei kai se appendix.}

\noindent
\textbf{Evaluation Metrics.} We measure the visual quality of the generated videos, using peak signal-to-noise ratio (PSNR), structural similarity index (SSIM)~\citep{wang2004image}, and learned perceptual image patch similarity (LPIPS)~\citep{zhang2018perceptual},
and we verify the identity of the target subject, using the average content distance (ACD)~\citep{Vougioukas_2019_CVPR_Workshops,tulyakov2018mocogan}. Additionally, we use the LSE-D (Lip Sync Error - Distance) and LSE-C (Lip Sync Error - Confidence) metrics~\citep{wav2lip,chung2016out}, to assess the lip synchronization, i.e. if the generated expressions are meaningful given the corresponding speech signal.

\begin{table*}[tb]
\caption{\textbf{Ablation Study.} Quantitative results for different variants of our model. The proposed multiplicative module $M$ leads to the best disentanglement (lower ACD) and visual quality (higher PSNR) with the least possible parameters. }
% The proposed multiplicative module $M_{B}$ enhances the visual quality (higher PSNR).}
\label{tab:ablation}
\vspace{-10pt}
\begin{center}
% \scalebox{0.95}{
\begin{tabular}{|l|c|c|c|c|}
% {|@{\hspace{0.15mm}}l|@{\hspace{0.45mm}}c@{\hspace{0.45mm}}|@{\hspace{0.45mm}}c@{\hspace{0.45mm}}|}
\hline
\textbf{Method} & \textbf{PSNR $\uparrow$} & \textbf{ACD $\downarrow$} & \textbf{LSE-D $\downarrow$} & \textbf{LSE-C $\uparrow$}\\
\hline\hline
Baseline NeRF (without $M$)& 28.65 & 0.229 & 9.08 & 4.06\\
\hline
% (A1) $\bm{W}_1 = \bm{0}$ & 30.17  & 0.16\\
(A1) $M(\bm{e}, \bm{i}) = \bm{W}_{2} \bm{e} + \bm{W}_{3} \bm{i}$ & 29.08 & 0.200 & 8.80 & 4.19 \\
% (A2) $\bm{W}_1 = \bm{I}$ & 30.10 & 0.18\\
(A2) $M(\bm{e}, \bm{i}) = \left(\bm{e} * \bm{i} \right) + \bm{W}_{2} \bm{e} + \bm{W}_{3} \bm{i}$ & 28.84 & 0.207 & 9.07 & 3.80\\
% (A3) $\bm{W}_1 = \bm{W}_2 = \bm{W}_3$ & 29.68  & 0.23\\
(A3) $M(\bm{e}, \bm{i}) = \bm{W}_{1} \left(\bm{e} * \bm{i} \right) + \bm{W}_{1} \bm{e} + \bm{W}_{1} \bm{i}$ & 28.50 & 0.227 & 9.19 & 3.61 \\
% (A4) $\bm{W}_2 = \bm{W}_3 = \bm{0}$ & 31.00 & 0.14\\
(A4) $M(\bm{e}, \bm{i}) = \bm{W}_{1} \left(\bm{e} * \bm{i} \right)$ & 29.12 & 0.228 & 10.49 & 2.36 \\
(A5) $M(\bm{e}, \bm{i}) = \left(\bm{W}_2 \bm{e} \right) * \left(\bm{W}_3 \bm{i} \right) + \bm{W}_{2} \bm{e} + \bm{W}_{3} \bm{i}$  & 28.95 & 0.204 & 9.25 & 3.58 \\
% (A6) $M(\bm{e}, \bm{i}) = \bm{C} \left[ \left(\bm{U}_1 \bm{e} \right) * \left(\bm{U}_2 \bm{i} \right) \right] + \bm{W}_{2} \bm{e} + \bm{W}_{3} \bm{i}$
(A6) $M(\bm{e}, \bm{i}) = \bm{W}_1 \left(\bm{e} * \bm{i} \right) + \bm{W}_{2} \bm{e} + \bm{W}_{3} \bm{i}$
& 28.53 & 0.221 & 9.27 & 3.29 \\
(A7) $M(\bm{e}, \bm{i}) = \bm{\mathcal{W}} \times_2 \bm{e} \times_3 \bm{i} $ & 28.41 & 0.274 & 9.82 & 3.10\\
\hline
\MethodName with $M$ without latent codes $\bm{l}$ & 28.95 & 0.201 & 8.82 & 4.19 \\
\MethodName with $M$ with $k = 32$ & 29.08 & 0.192 & 9.19 & 3.94 \\
\MethodName with $H$ with $N = 3$ & 28.69 & 0.209 & 9.37 & 4.08\\
\MethodName with $H$ with $N = 4$ & 28.00 & 0.289 & 10.07 & 2.74\\
\hline
\MethodName with $M$ (Ours)  & \textbf{29.73} & \textbf{0.158} & \textbf{8.62} & \textbf{4.24}\\
\MethodName with $H$ (Ours) & 28.95 & 0.180 & 8.82 & \textbf{4.24}\\
\hline
\end{tabular}
% }
\end{center}
% \vspace{-10pt}
\vspace{-10pt}
\end{table*}

\begin{figure}[tb]
\begin{center}
   \includegraphics[width=\linewidth]{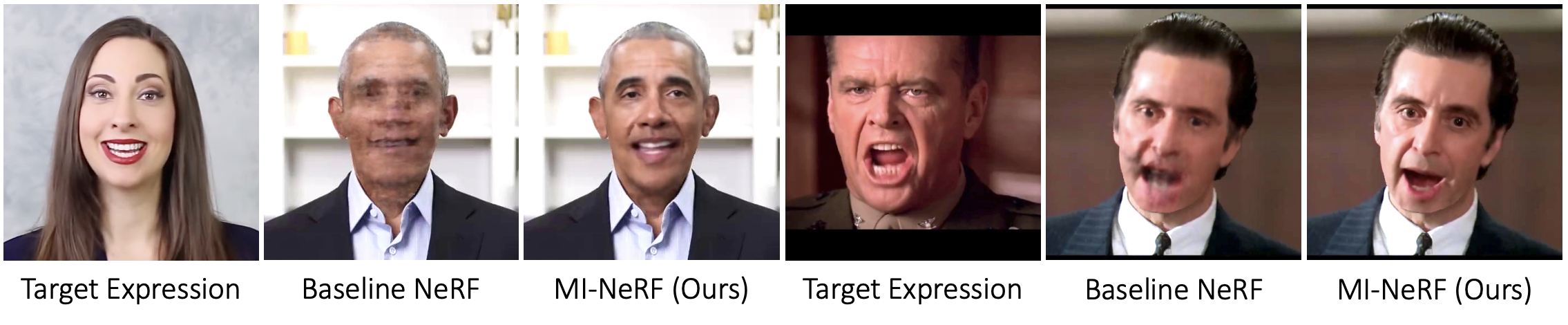}
\end{center}
\vspace{-10pt}
   \caption{\textbf{Ablation Study.} Qualitative comparison of \MethodName with Baseline NeRF that concatenates all input conditions, without using any multiplicative module, and leads to poor disentanglement. Our proposed multiplicative module demonstrates robustness, disentangling between identity and expression.}
\label{fig:baseline_ablation}
\vspace{-10pt}
\end{figure}

\begin{figure}[t]
\begin{center}
   \includegraphics[width=\linewidth]{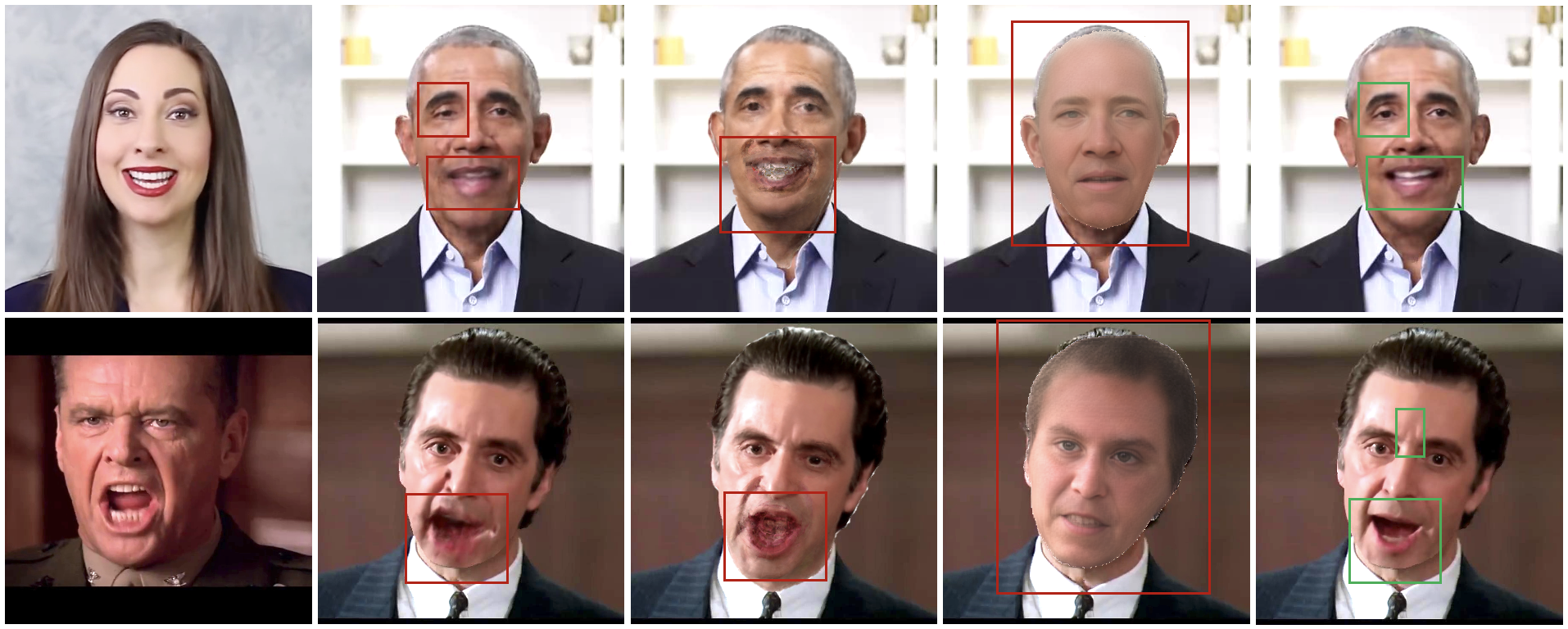}
   \includegraphics[width=\linewidth]{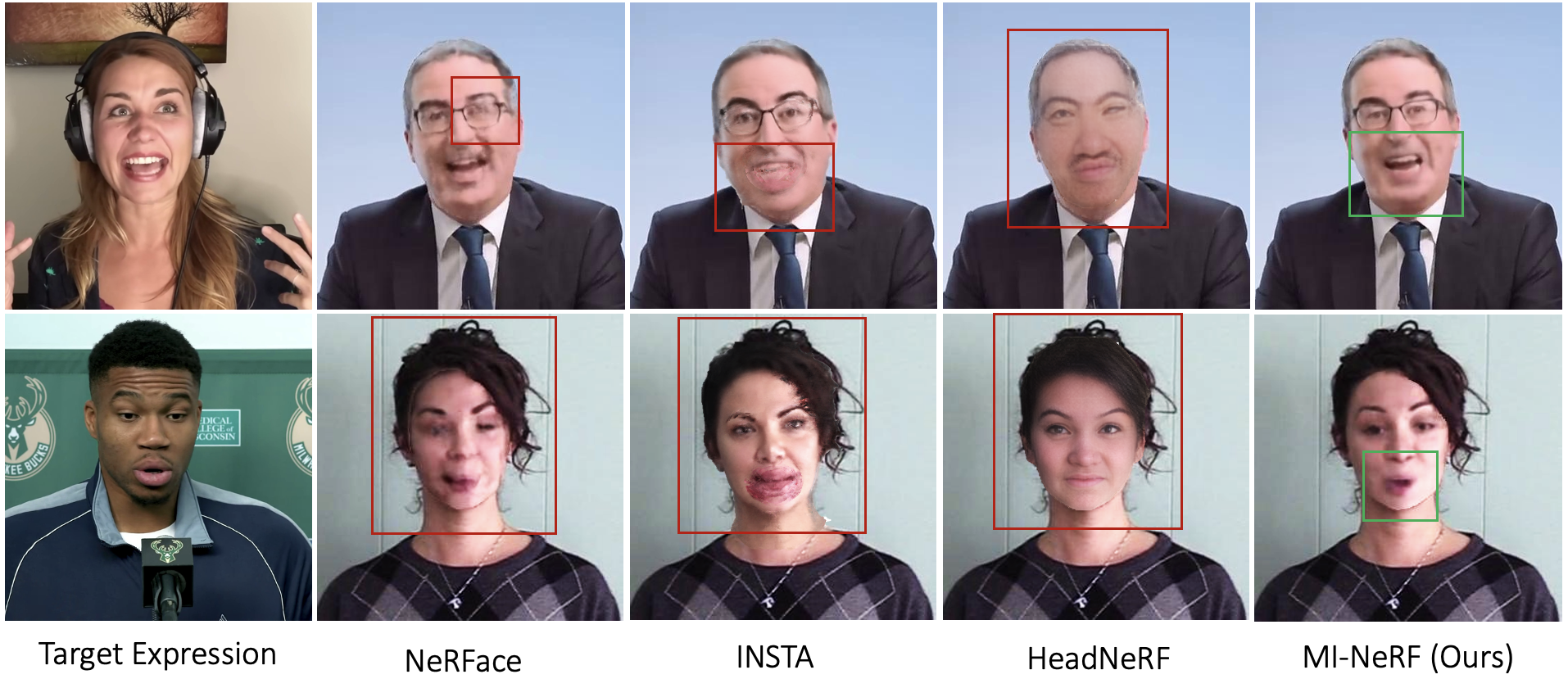}
\end{center}
\vspace{-10pt}
   \caption{\textbf{Transferring Novel Expressions.} Qualitative comparison of \MethodName with state-of-the-art approaches when transferring unseen expressions to a target identity. NeRFace~\citep{nerface} is a single-identity NeRF, INSTA~\citep{zielonka2023insta} is a single-identity geometry-guided deformable NeRF, and HeadNeRF~\citep{hong2022headnerf} is a NeRF-based parametric head model trained on a large dataset. Our method demonstrates robustness in synthesizing novel (unseen) expressions for any input identity.}
\label{fig:exptrans}
\vspace{-10pt}
\end{figure}

\subsection{Ablation Study}\label{sec:ablation}

We conduct an ablation study on the multiplicative module and the input conditions of \MethodName. Given 10 videos from different identities, we investigate variants of our model (see~\cref{tab:ablation}). After training, we synthesize a video of each identity given input expressions from another one. In this way, we evaluate if the model learns to disentangle identity and non-identity specific information.
% , in the most challenging case of two identities (see supplementary).
% We measure the visual quality, using peak signal-to-noise ratio (PSNR), structural similarity index (SSIM)~\citep{wang2004image}, and learned perceptual image patch similarity (LPIPS)~\citep{zhang2018perceptual},
% and we verify the identity of the target subject, using the average content distance (ACD)~\citep{Vougioukas_2019_CVPR_Workshops,tulyakov2018mocogan}.

Firstly, we evaluate the possibility of simply concatenating all the input vectors, as usually done in standard identity-specific NeRFs~\citep{nerface,adnerf,lipnerf}, i.e. omitting the multiplicative module. We call this ``Baseline NeRF''. As shown in~\cref{tab:ablation} and~\cref{fig:baseline_ablation}, Baseline NeRF cannot learn to disentangle between different identities. Based on the identity-expression pairs seen during training, it frequently synthesizes a different identity than the target one, or a mixture of identities, leading to visible artifacts.

Variants of the proposed multiplicative module $M$ are examined in rows (A1) to (A7) to determine the simplest and most effective option.
% Interestingly, we found that all three matrices $\bm{W}_{1}$, $\bm{W}{_2}$, and $\bm{W}_{3}$ in Eq.~\ref{eq:multA} need to be distinct and non-zero. This approach allows $\bm{W}_{1}$ to capture the relationship between identity and expression, while $\bm{W}_{2}$ and $\bm{W}_{3}$ capture information from the expression and identity vectors respectively. 
We started by just learning a simple mapping of the $\bm{e}$ and $\bm{i}$ vectors, using $\bm{W}_{2}$ and $\bm{W}_{3}$ matrices (A1). We then explored variants that include a Hadamard product $(\bm{e} * \bm{i})$ to model their non-linear interaction. 
% A simplified version (A5) that employs only $\bm{W}_{2}$ and $\bm{W}_{3}$ results in a decrease in performance, as does (A6), which has fewer parameters and utilizes low rank $\bm{U}_{1} \in \mathbb{R}^{k\times d}$, $\bm{U}_{2} \in \mathbb{R}^{k\times d}$, $\bm{C} \in \mathbb{R}^{d\times k}$, and $k < d$ ($k = 8$ or $k = 32$). We also discovered that our learned $\bm{W}_{1}$, $\bm{W}_{2}$, and $\bm{W}_{3}$ have full rank, indicating that the output $d$ dimension is necessary to capture all the information.
The last variant (A7) is inspired by~\citet{wang2019adversarial} and uses $\bm{\mathcal{W}} \in \mathbb{R}^{d\times d\times d}$, requiring significantly more parameters than our proposed $M$ ($d^3$ vs $d^2$). The variant (A6) can be derived from (A7),
% (see appendix).
% (see supplementary material).
using similar arguments to \cref{ssec:mi_nerf_app_proof_proposition_multiplicative}.
All these variants (A1) - (A7) lead to different disentanglement results, with many of them being quite poor.\looseness-1

In the following rows, we show the small decrease in visual quality if we omit the latent codes, that learn very small per-frame variations in appearance, as noted in Sec.~\ref{sec:input}, or use different hyperparameters: $k = 32$ for $M$ and $N = 3$ or $N = 4$ for $H$. We conclude that our proposed multiplicative module $M$ leads to the best disentanglement and highest visual quality with the least possible parameters. Our proposed $H$ leads to the second best disentanglement between identity and expression (low ACD metric) and produces videos of comparable visual quality and lip synchronization (LSE metrics). Qualitatively, we observe similar results for $H$ with $N = 2$ as $M$.

\subsection{Facial Expression Transfer}\label{sec:exp_faceexptransfer}

In this section, we demonstrate the effectiveness of \MethodName for facial expression transfer. Given an identity from the training set, \MethodName 
enables explicit control of their expressions, and thus can synthesize high-quality videos of them given novel expressions as input. 

% (see \cref{fig:exp_suppl}). It can animate multiple identities under the same novel expression and novel view (see \cref{fig:heads}).

\begin{figure}[tb]
    \centering
% \begin{subfigure}{0.49\linewidth}
% \includegraphics[width=\linewidth]{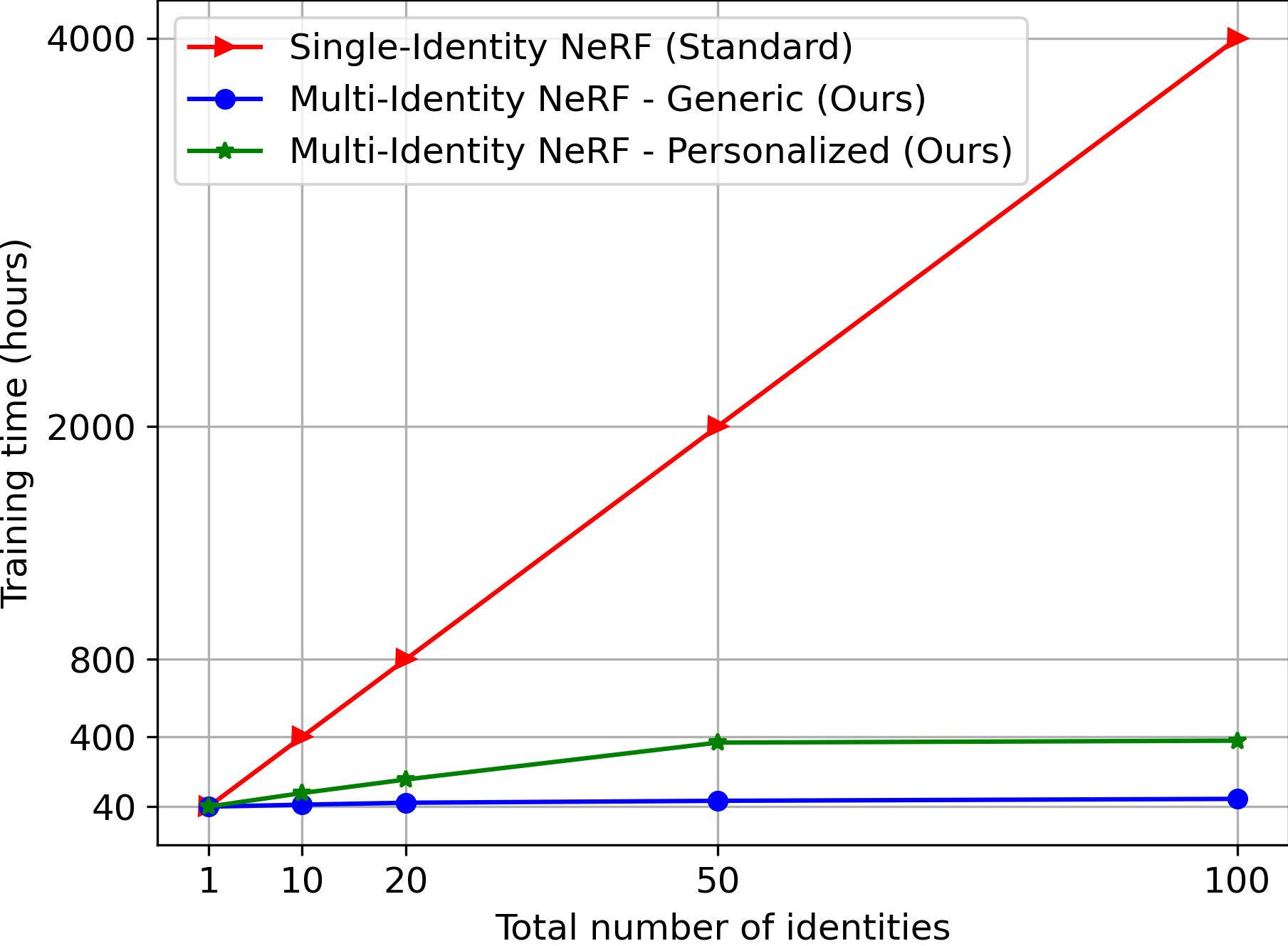}
% \caption{Total training time vs number of identities}
% \label{fig:plots_training_time}
% \end{subfigure}
% \hfill
% \begin{subfigure}{0.49\linewidth}
% \includegraphics[width=\linewidth]{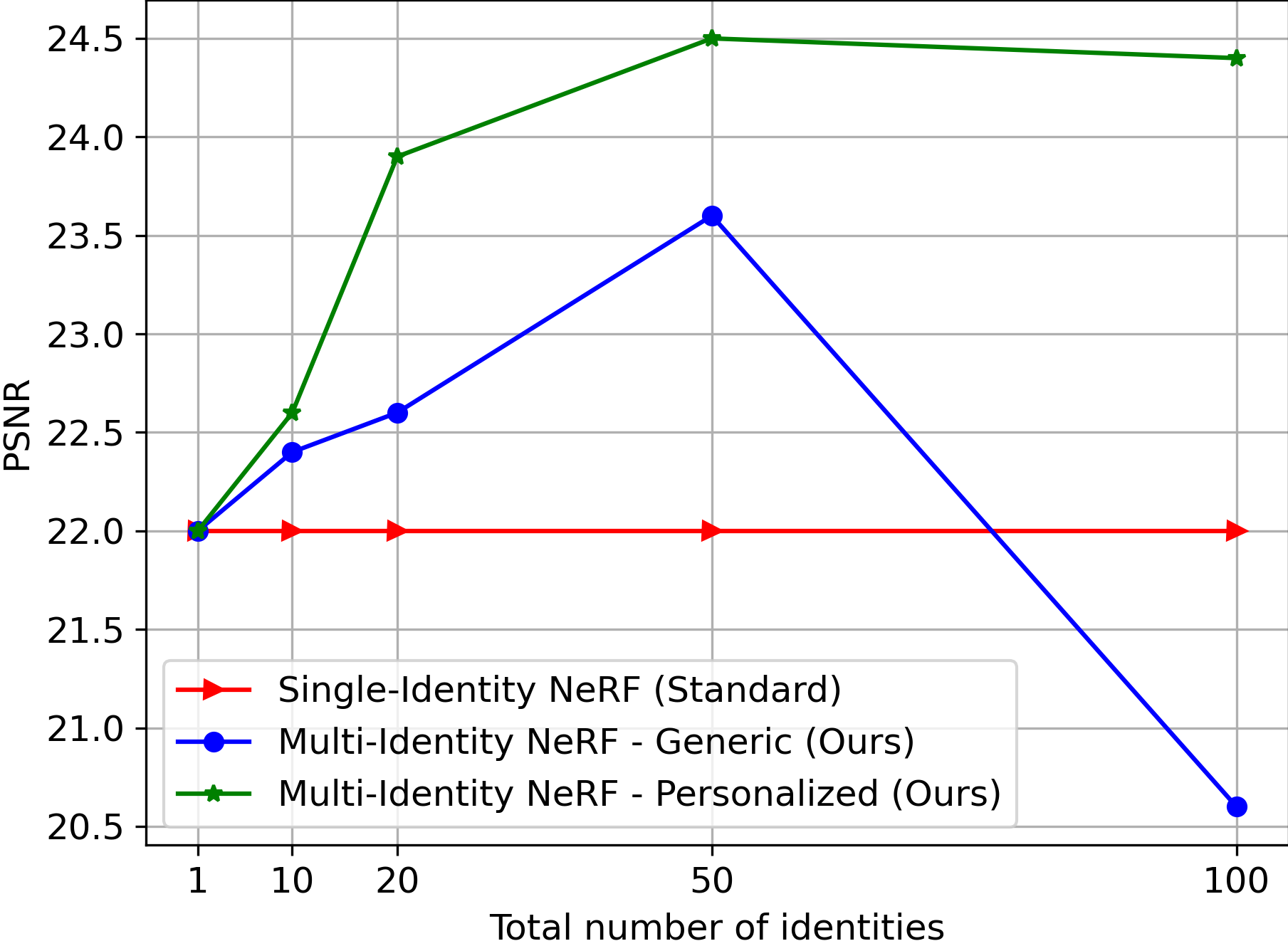}
% \caption{Visual quality vs number of identities}
% \label{fig:plots_psnr}
% \end{subfigure}
\subfloat{\includegraphics[width=0.49\linewidth]{images/time_spk.png}}
\subfloat{\includegraphics[width=0.49\linewidth]{images/psnr_spk.png}}
    % \subfloat[\stl]{\includegraphics[width=0.32\textwidth]{figures/STL10_label_smooth_new.png}}
    % \vspace{-10pt}
    \caption{\textbf{Left:} Total training time vs total number of identities. Standard single-identity NeRFs, like NeRFace~\citep{nerface}, AD-NeRF~\citep{adnerf}, and LipNeRF~\citep{lipnerf} require approximately 40 hours training per identity. On the contrary, our \MethodName (generic) can be trained on 100 identities in 80 hours, leading to a $90\%$ decrease approximately. Further personalization takes another 5-8 hours per identity. \textbf{Right:} Corresponding visual quality of generated videos with challenging novel expressions, measured by PSNR (higher the better). Increasing the number of identities improves the robustness of our model to unseen expressions.}
    \label{fig:plots}
    \vspace{-10pt}
\end{figure}

% a $50\times$ improvement over the previous methods.

\begin{table}[tb]
\caption{\textbf{Facial Expression Transfer.} Quantitative comparison of our method with state-of-the-art methods for transferring facial expressions. }
\label{tab:quant_exp}
\vspace{-10pt}
\begin{center}
\scalebox{0.95}{
\begin{tabular}{|l|c|c|c|c|}
% {|@{\hspace{0.15mm}}l|@{\hspace{0.45mm}}c@{\hspace{0.45mm}}|@{\hspace{0.45mm}}c@{\hspace{0.45mm}}|@{\hspace{0.45mm}}c@{\hspace{0.45mm}}|@{\hspace{0.45mm}}c@{\hspace{0.45mm}}|}
\hline
\textbf{Method} & \textbf{PSNR $\uparrow$} & \textbf{SSIM $\uparrow$} & \textbf{LPIPS $\downarrow$} & \textbf{ACD $\downarrow$} \\
\hline\hline
NeRFace~\citep{nerface} & 28.11 & \textbf{0.88} & \textbf{0.13} & \textbf{0.11}\\
INSTA~\citep{zielonka2023insta} & 28.02 & 0.87 & 0.12 & 0.10 \\
HeadNeRF~\citep{hong2022headnerf} & 19.89 & 0.74 & 0.32 & 0.85 \\
Baseline NeRF & 26.21 & 0.85 & 0.22 & 0.32 \\
\MethodName (Ours) & \textbf{28.28} & \textbf{0.88} & \textbf{0.13} & \textbf{0.11}\\
\hline
\end{tabular}
}
\end{center}
\vspace{-10pt}
\end{table}

\noindent
\textbf{Comparisons.} 
Fig.~\ref{fig:exptrans} demonstrates the results of \MethodName with $M$ (personalized) for challenging novel expressions, not seen for the target identity. We compare with methods that similarly allow meaningful control of expression parameters. NeRFace~\citep{nerface} is a standard identity-specific NeRF, without any multiplicative module. Its immediate extension is the Baseline NeRF that is trained on multiple identities, concatenating additional identity vectors as input. INSTA~\citep{zielonka2023insta} learns a single-identity NeRF, embedded around FLAME~\citep{FLAME:SiggraphAsia2017} and based on neural graphics primitives, that enables faster training and rendering. HeadNeRF~\citep{hong2022headnerf} is a NeRF-based parametric head model, trained on a large amount of high-quality images from multiple identities. \cref{tab:quant_exp} shows the corresponding quantitative results of facial expression transfer, computed on 10 synthesized videos from our test set. 

% Qualitative and quantitative results are shown in Fig.~\ref{fig:exptrans} and Table~\ref{tab:exptrans}. 

We notice that NeRFace and INSTA can produce good visual quality, but cannot generalize to unseen expressions. 
% This is especially true if only a short video clip is used for training, where 
Trained only on a video of a single identity, they have only seen a limited variety of facial expressions, and thus can lead to artifacts in case of novel input expressions (see \cref{fig:exptrans}). 
% Baseline NeRF, as also mentioned in the previous section, leads to poor disentanglement and frequently generates a mixture of identities, depending on the input expression. 
HeadNeRF distorts the target identity and inaccurately produces the target facial expression. Our method demonstrates robustness in synthesizing novel expressions for any identity, as it leverages information from multiple subjects during training. Fig.~\ref{fig:plots} (right) shows the corresponding PSNR, computed on frames with challenging unseen expressions for a particular subject. \MethodName significantly outperforms the standard single-identity NeRFace~\citep{nerface} and INSTA~\citep{zielonka2023insta} in such cases. Increasing the number of training identities improves the robustness of our generic model. Further personalization enhances the final visual quality.

\noindent
\textbf{Training Time.} \MethodName significantly outperforms the standard single-identity NeRF-based methods, like NeRFace~\citep{nerface}, in terms of the training time (see  Fig.~\ref{fig:plots} left). Training on 100 identities needs only about 80 hours of training, compared to 40 hours per identity needed by standard NeRFs, leading to a $90\%$ decrease approximately. Further personalization for a target identity adds only another 5-8 hours of training on average, enhancing the visual quality (see Fig.~\ref{fig:plots} right).
% , where we chose challenging novel expressions for an identity to show the difference). 
In the appendix, we show that our multiplicative module can also be applied to faster methods, like INSTA~\citep{zielonka2023insta}, extending them to multiple identities and similarly achieving a $90\%$ decrease in training time for 100 identities.
We encourage the readers to watch our suppl. video.

% Results on test set with fine-tuning on target speaker (short video)

% Training time vs \# identities (without fine-tuning)

% LPIPS vs \#identities (without + with fine-tuning)

% LSE-D vs \#identities (without + with fine-tuning) to measure how accurately the diverse expressions are synthesized

% Figure with unseen/novel expressions (e.g. 4 rows, 2 last unseen/extreme expressions) - cross-identity experiments

\begin{figure*}[t]
\begin{center}
% \fbox{\rule{0pt}{2in} \rule{0.9\linewidth}{0pt}}
   \includegraphics[width=\linewidth]{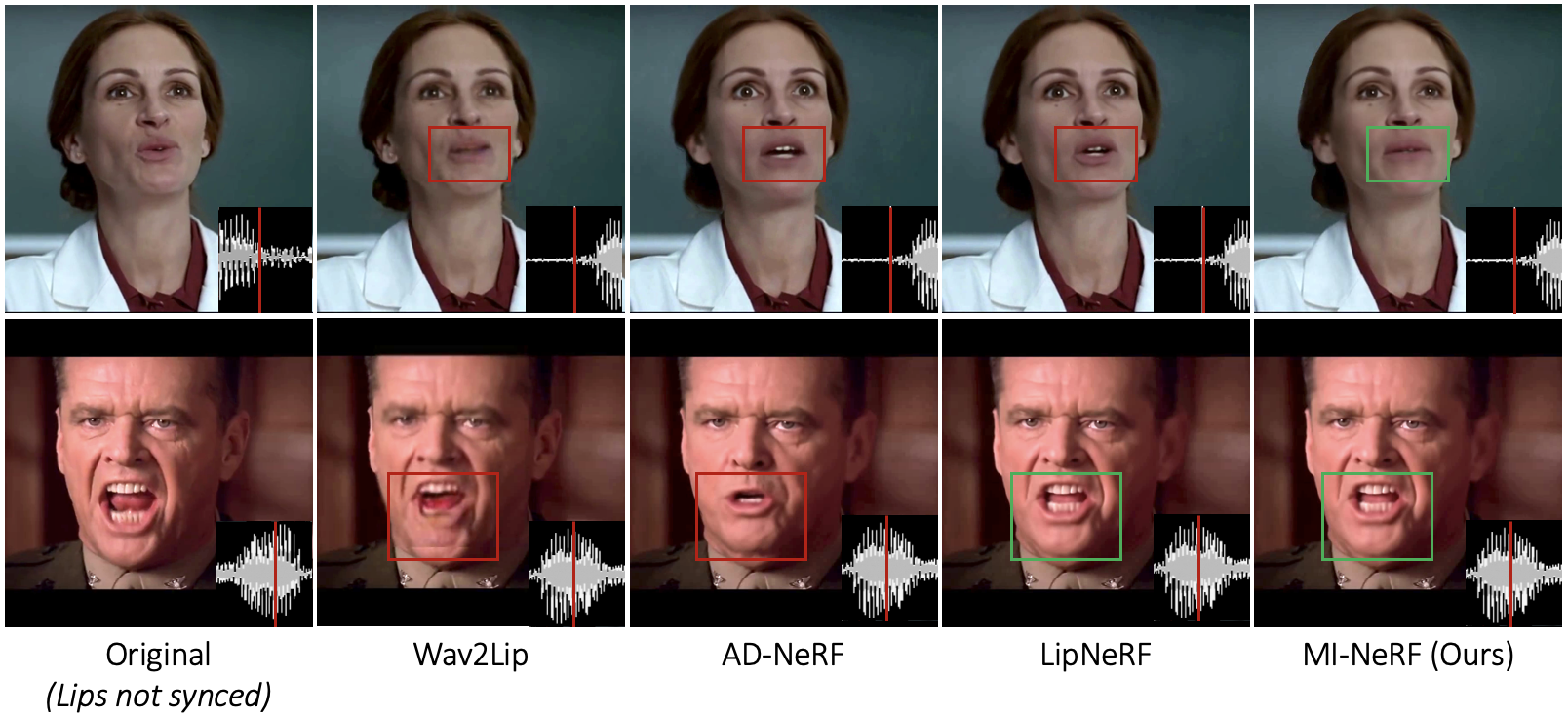}
\end{center}
\vspace{-10pt}
   \caption{\textbf{Lip Synced Video Synthesis.} Qualitative comparison of our method with state-of-the-art approaches, GAN-based Wav2Lip~\citep{wav2lip}, AD-NeRF~\citep{adnerf} and Lip-NeRF~\citep{lipnerf}. The original video is in English (1st column). The generated videos (columns 2-5) are lip synced to dubbed audio in Spanish.}
\label{fig:lipsync}
\vspace{-10pt}
\end{figure*}

\begin{table}[t]
% \vspace{-10pt}
\caption{\textbf{Lip Synced Video Synthesis.} Quantitative comparison of our method with state-of-the-art approaches on generated videos lip-synced to dubbed audio in different languages.}
% \vspace{-10pt}
% \MethodName achieves state-of-the-art performance, trained on multiple identities simulatenously, compared to the single-identity NeRFace~\citep{nerface}.}
% \caption{\textbf{Lip Synced Video Synthesis.} Quantitative comparison of our method with state-of-the-art approaches for generated videos lip-synced to dubbed audio in different languages. The last row corresponds to the original videos, where the lip movements are out of sync with the dubbed audio.}
\label{tab:quant_lip}
\begin{center}
\scalebox{0.95}{
\begin{tabular}{ |l|c|c|c|c|c|c|c| }
% {|@{\hspace{0.15mm}}l|@{\hspace{0.45mm}}c@{\hspace{0.45mm}}|@{\hspace{0.45mm}}c@{\hspace{0.45mm}}|@{\hspace{0.45mm}}c@{\hspace{0.45mm}}|@{\hspace{0.45mm}}c@{\hspace{0.45mm}}|}
% @{\hspace{0.45mm}}c@{\hspace{0.45mm}}|}
\hline
\textbf{Method} & \textbf{LSE-D $\downarrow$} & \textbf{LSE-C $\uparrow$} & \textbf{PSNR $\uparrow$} & \textbf{SSIM $\uparrow$} \\
\hline
\hline
AD-NeRF~\citep{adnerf} & 11.40 & 1.28 & {27.38} &  {0.84} \\
LipNeRF~\citep{lipnerf} & {9.92} & {2.71} & {30.10} & {0.89} \\
GeneFace~\citep{ye2023geneface} & 11.80 & 2.22 & 25.56 & 0.85\\
\MethodName (Ours) & \textbf{9.46} & \textbf{2.98} & \textbf{30.21} & \textbf{0.90} \\
\hline
% Dubbed & 12.02 & 0.84 & inf & 1.0 & 0.0  \\
%  \hline
\end{tabular}
}
\end{center}
% \vspace{-10pt}
\vspace{-10pt}
\end{table}

% \begin{table}[t]
% \caption{\textbf{Lip Synced Video Synthesis.} Quantitative comparison of our method with state-of-the-art approaches for generated videos lip-synced to dubbed audio in different languages. The last row corresponds to the original videos, where the lip movements are out of sync with the dubbed audio.}
% % \grig{Ti einai to inf sto dubbed?}}
% \label{tab:lipsync}
% \begin{center}
% % \scalebox{0.8}{
% \begin{tabular} %{ |l||c|c|c|c|c|c|c| }
% {|@{\hspace{0.15mm}}l|@{\hspace{0.45mm}}c@{\hspace{0.45mm}}|@{\hspace{0.45mm}}c@{\hspace{0.45mm}}|@{\hspace{0.45mm}}c@{\hspace{0.45mm}}|@{\hspace{0.45mm}}c@{\hspace{0.45mm}}|@{\hspace{0.45mm}}c@{\hspace{0.45mm}}|}
% \hline
% \textbf{Method} & \textbf{LSE-D $\downarrow$} & \textbf{LSE-C $\uparrow$} & \textbf{PSNR $\uparrow$} & \textbf{SSIM $\uparrow$} & \textbf{LPIPS $\downarrow$} \\
% \hline
% \hline
% % Speech2Vid & 10.97 & 1.71 & 27.67 & 0.84 & 0.26 \\
% % MakeItTalk & 10.23 & 2.21 & 14.15 & 0.25 & 0.39 \\
% % PC-AVS & {8.78} & {3.65} & 14.68 & 0.30 & 0.48 \\
% % Wav2Lip & \textbf{8.06} & \textbf{4.77} & {28.90} & {0.89} & {0.17} \\
% AD-NeRF & 11.40 & 1.28 & {27.38} &  {0.84} & {0.20}  \\
% LipNeRF & {9.92} & {2.71} & {30.10} & {0.89} & \textbf{0.16} \\
% GeneFace &&&& \\
% \MethodName (Ours) & \textbf{9.46} & \textbf{2.98} & \textbf{30.21} & \textbf{0.90} & 0.18 \\
% \hline
% % Dubbed & 12.02 & 0.84 & inf & 1.0 & 0.0  \\
% %  \hline
% \end{tabular}
% \end{center}
% \vspace{-10pt}
% % \vspace{-15pt}
% \end{table}

\subsection{Lip Synced Video Synthesis}\label{sec:exp_lipsync}

% Results on test set with fine-tuning on target speaker (short video)

% + Comparison with new methods (e.g. GeneFace)

\MethodName can be also used for audio-driven talking face video synthesis, also called lip-syncing. Prior work, LipNeRF~\citep{lipnerf}, has shown that conditioning a NeRF on the 3DMM expression space, compared to audio features, leads to more accurate and photorealistic lip synced videos. Our method naturally extends LipNeRF to multiple identities, using their proposed audio-to-expression mapping.
For this task, we used the dataset proposed by LipNeRF~\citep{lipnerf}~\footnote{We would like to thank the authors of LipNeRF for providing the cinematic data from YouTube.}. It includes 10 videos from popular movies in English, of around 30 seconds to 2 minutes long each in HD resolution (720p), and corresponding dubbed audio in 2 or 3 different languages for each video, including French, Spanish, German and Italian.

% of around 30 seconds to 2 minutes long each in HD resolution (720p). Along with the original audio in English, they collected and aligned the corresponding dubbed audio for each video in 2 or 3 different languages, including French, Spanish, German and Italian. These data are more challenging, due to the cinematic lighting, large head pose variations, emphatic head movements, and expressiveness of the actors.

\noindent
\textbf{Results. } \cref{tab:quant_lip} shows the corresponding quantitative evaluation of the synthesized videos, lip synced to dubbed audio in different languages. Leveraging information from multiple identities, \MethodName slightly improves the LSE metrics and achieves similar visual quality with LipNeRF, at much lower training time (see Fig.~\ref{fig:plots}). AD-NeRF~\citep{adnerf} and GeneFace~\citep{ye2023geneface} lack in lip synchronization for this task.
Fig.~\ref{fig:lipsync} compares the mouth position for two examples, lip synced to Spanish (original audio in English). The GAN-based method, Wav2Lip~\citep{wav2lip}, frequently produces artifacts and blurry results. Since it operates in the 2D image space, it cannot handle large 3D movements. AD-NeRF overfits to the training audio and cannot generalize well to different speech inputs. LipNeRF performs well, but sometimes cannot produce unseen expressions (e.g. does not close the mouth in the first row), since it is only trained on a single video of limited duration. Both AD-NeRF and LipNeRF are standard single-identity NeRFs, requiring expensive identity-specific optimization.
% , compared to our multi-identity NeRF.

\subsection{Short-Video Personalization}

\MethodName can also be adapted to an unseen identity, i.e. that is not seen as a part of the initial training set (see Sec.~\ref{sec:personalization}). Fig.~\ref{fig:short} demonstrates the results when only a very short video of the new identity is available (1 or 3 seconds length). Please note that in this case we use a small number of \textit{consecutive} frames, and thus a very small part of the expression space is covered.
In this case, the single-identity INSTA~\citep{zielonka2023insta} cannot learn a good head representation of the identity, given that very short video, and produces artifacts. HeadNeRF is trained on a large dataset of images and its fitting requires only a single image. However, it fails to capture the identity accurately. Our method demonstrates robustness under unseen expressions and novel views for the new identity.
We include more quantitative and qualitative results for our short-video personalization in the appendix. We also encourage the readers to watch our supplementary video.

\begin{figure*}[t]
\begin{center}
   \includegraphics[width=\linewidth]{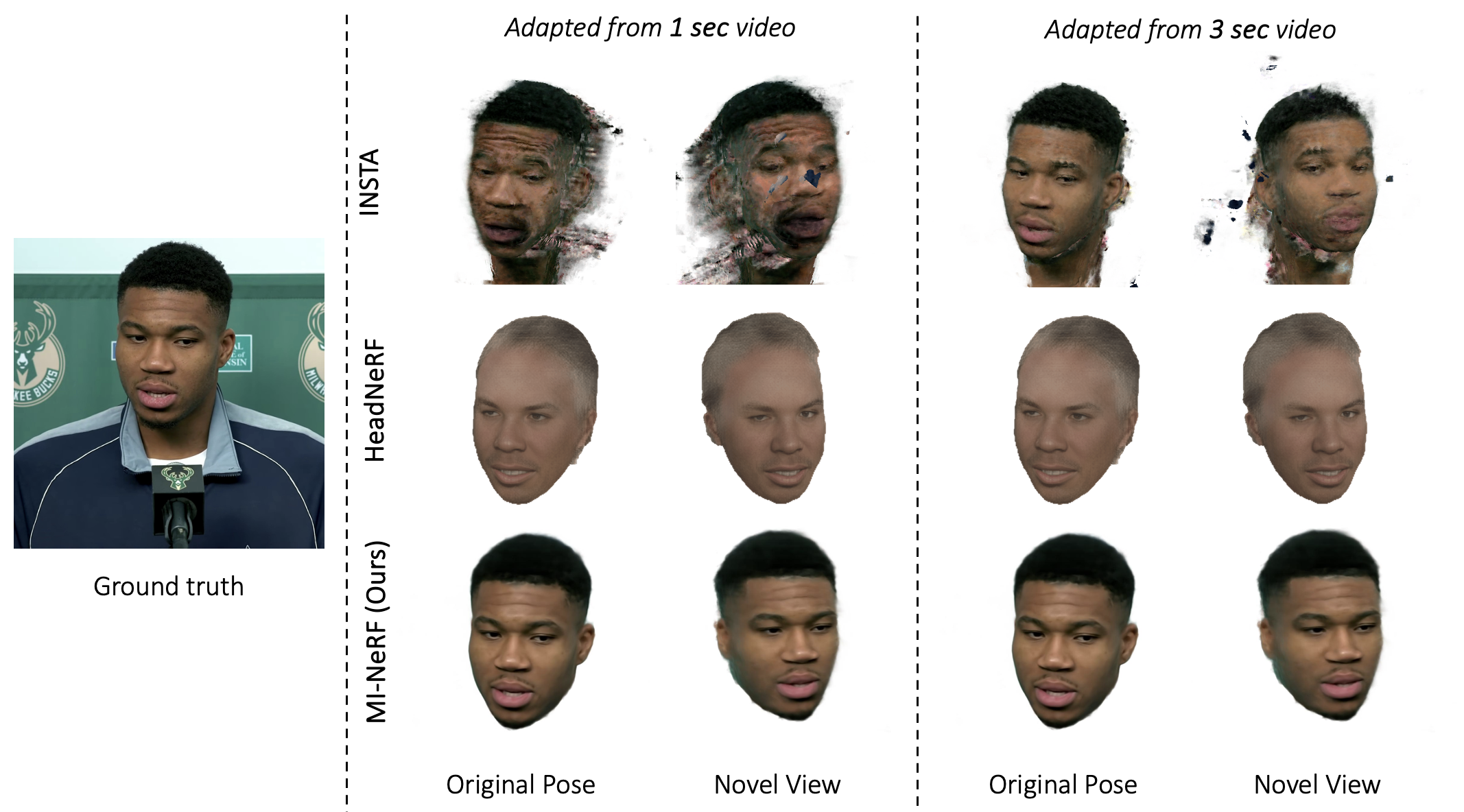}
\end{center}
\vspace{-10pt}
   \caption{\textbf{Short-Video Personalization.} Learning a novel identity from a short video of 1 or 3 seconds duration. Qualitative comparison of \MethodName with the single-identity INSTA~\citep{zielonka2023insta} and the NeRF-based parametric head model HeadNeRF~\citep{hong2022headnerf}. Our method demonstrates robustness across unseen expressions and novel views for any new identity.}
\label{fig:short}
\vspace{-10pt}
\end{figure*}

\section{Conclusion}

In this work, we introduce \MethodName that learns a single dynamic NeRF from monocular talking face videos of multiple identities. We propose a multiplicative module that captures the non-linear interactions of identity and non-identity specific information. Trained on multiple identities, \MethodName
significantly reduces the training time over standard single-identity NeRFs. Our model can be further personalized for a target identity, given only a short video of a few seconds length, achieving state-of-the-art performance for facial expression transfer and talking face video synthesis. In the future, we envision extending our approach to thousands of identities, learning collectively from very short in-the-wild video clips.

\section*{Reproducibility Statement}  % % Angelina, ayto to zitane ta instructions edw: https://iclr.cc/Conferences/2024/AuthorGuide . De leei kai kati, apla de metraei sto page limit opws leei se istoselida. 
Our plan is to make the source code of our model publicly available once our work is accepted. We provide comprehensive documentation of the hyperparameters employed and offer detailed explanations of all the techniques used, supported by thorough ablation studies.

\bibliography{iclr2024_conference}
\bibliographystyle{iclr2024_conference}

\newpage
\appendix
% \section{Appendix}
% You may include other additional sections here.

\section*{Contents of the Appendix}

% \grig{Proteinw panta na mpainei ena tetoio section. O reviewer/reader den exei tin idia ypomoni na psaxnei olo to supplementary poy mporei na einai kai poly ligotero standard apo main paper poy yparxoyn 3-4 sections kai se logiki seira.}
\noindent
The appendix is organized as follows: 
\begin{itemize}
    \item Technical Derivations of the Modules in Sec.~\ref{sec:mi_nerf_app_technical_analysis_model}.
    \item Additional Ablation Study in Sec.~\ref{sec:2}.
    \item Additional Results in Sec.~\ref{sec:3}.
    \item Comparison with Faster Methods in Sec.~\ref{sec:7}.
    \item Implementation Details in Sec.~\ref{sec:4}.
    \item Limitations in Sec.~\ref{sec:limit}.
    \item Ethical Considerations in Sec.~\ref{sec:5}.
    \item Dataset Details in Sec.~\ref{sec:6}.
\end{itemize}
We strongly encourage the readers to watch our supplementary video.

\section{Technical analysis of the models}
\label{sec:mi_nerf_app_technical_analysis_model}

In this section, we complete the proofs for the model derivation. Firstly, let us establish a more detailed notation. 

\textbf{Notation}: Tensors are symbolized by calligraphic letters, e.g.,  $\bmcal{X}$. The \textit{mode-$m$ vector product} of $\bmcal{X}$ with a vector $\bm{u} \in \realnum^{I_m}$ is denoted by $\bmcal{X} \times_{m} \bm{u}$. A core tool in our analysis is the CP decomposition~\citep{kolda2009tensor}. By considering the mode-$1$ unfolding of an $M\myth$-order tensor $\bmcal{X}$, the CP decomposition can be written in matrix form as in~\citet{kolda2009tensor}: 

\begin{equation}
    \label{eq:cp_unfolding}
    \bm{X}_{[1]}  \doteq \bm{U}\matnot{1} \bigg( \bigodot_{m = M}^{2} \bm{U}\matnot{m}\bigg)^T\;,
\end{equation}
where $\{ \bm{U}\matnot{m}\}_{m=1}^{M}$ are the factor matrices and $\odot$ denotes the Khatri-Rao product.

\subsection{Proof of \cref{prop:mi_nerf_multiplicative_module}}
\label{ssec:mi_nerf_app_proof_proposition_multiplicative}

To prove the \cref{prop:mi_nerf_multiplicative_module} (see \cref{sec:M_and_H} of the main paper), we will construct the special form from the general multiplicative interaction. 

The full multiplicative interaction between an expression vector $\bm{e} \in \mathbb{R}^{d}$ and an identity vector $\bm{i} \in \mathbb{R}^{d}$ is described by the following formula:
\begin{equation}
  M^{\dagger}(\bm{e}, \bm{i}) = \bm{\mathcal{W}} \times_2 \bm{e} \times_3 \bm{i}\;, 
\end{equation}
where $\bm{\mathcal{W}} \in \realnum^{o\times d\times d}$ is a learnable tensor. We can also add the linear interactions in this model, which augment the equation as follows:
\begin{equation}
  M^{\dagger}(\bm{e}, \bm{i}) = \bm{\mathcal{W}} \times_2 \bm{e} \times_3 \bm{i} + \bm{W}_{2} \bm{e} + \bm{W}_{3} \bm{i}\;, 
  \label{eq:mi_nerf_app_aux_equation_second_degree_format}
\end{equation}
where $\bm{W}_{2}, \bm{W}_{3} \in \realnum^{o\times d}$ are learnable parameters. We can use the CP decomposition~\citep{kolda2009tensor} to induce a low-rank decomposition on the third-order tensor $\bm{\mathcal{W}}$. This results in the following expression:
\begin{equation}
  M^{\dagger}(\bm{e}, \bm{i}) = \bm{C} \left(\bm{A} \odot \bm{B} \right)^T \left(\bm{e} \odot \bm{i} \right) + \bm{W}_{2} \bm{e} + \bm{W}_{3} \bm{i}\;, 
  \label{eq:mi_nerf_app_aux_equation_CP_format}
\end{equation}
where $\bm{C} \in \realnum^{o\times k}$ and $\bm{A}, \bm{B} \in \realnum^{d\times k}$ are learnable parameters. The symbol $k$ is the rank of the decomposition, while $\odot$ denotes the Khatri-Rao product. 

We can then apply the mixed product property, and we obtain the following expression: 
\begin{equation}
  M^{\dagger}(\bm{e}, \bm{i}) = \bm{C} \left[\left(\bm{A}^T \bm{e} \right) * \left(\bm{B}^T \bm{i} \right) \right] + \bm{W}_{2} \bm{e} + \bm{W}_{3} \bm{i}\;, 
\end{equation}
where $*$ is the Hadamard product. If we set $o=k=d$ and set $A^T=\bm{U}_1, B^T=\bm{U}_2$, then we end up with the model of \cref{eq:multA} of the main paper, which concludes the proof.

\subsection{Intuition on \cref{eq:multB}}
\label{ssec:mi_nerf_app_proof_proposition_high-degree}

The model of \cref{eq:multB} can also capture multiplicative interactions in case of $N=2$ or high-degree interactions in case $N > 2$. We provide some constructive proof to show the case of $N=2$ below, since the number of terms increases fast already for this case. Subsequently, we focus on the case of high-degree interactions. 

\begin{proposition}
    For $N=2$, the model of \cref{eq:multB} captures multiplicative interactions between the expression $\bm{e}$ and the identity $\bm{i}$ vector.
    \label{prop:mi_nerf_CCP_two-var_module_second-degree}
\end{proposition}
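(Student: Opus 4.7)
The plan is to prove \cref{prop:mi_nerf_CCP_two-var_module_second-degree} by direct expansion, reducing the unrolled recursion to a form that matches the multiplicative template $\bm{C}[(\bm{U}_1\bm{e}) * (\bm{U}_2\bm{i})] + \text{(lower-order terms)}$ identified in \cref{prop:mi_nerf_multiplicative_module}. Since $N=2$ corresponds to only one recursive step, the argument is purely constructive: substitute $\bm{x}_1$ into the update rule, distribute the Hadamard product across the sum, and read off the bilinear cross-term in $\bm{e}$ and $\bm{i}$.

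Concretely, I would first write out $\bm{x}_1 = \bm{U}_{(1,1)}\bm{e} + \bm{U}_{(1,2)}\bm{i}$ and substitute into the recursion to obtain
\begin{equation*}
\bm{x}_2 = \bm{x}_1 + \bigl(\bm{U}_{(2,1)}\bm{e} + \bm{U}_{(2,2)}\bm{i}\bigr) * \bigl(\bm{U}_{(1,1)}\bm{e} + \bm{U}_{(1,2)}\bm{i}\bigr).
\end{equation*}
Next, I would invoke the bilinearity and commutativity of the Hadamard product to distribute this into four terms: two self-interactions $(\bm{U}_{(2,1)}\bm{e})*(\bm{U}_{(1,1)}\bm{e})$ and $(\bm{U}_{(2,2)}\bm{i})*(\bm{U}_{(1,2)}\bm{i})$, and two cross-interactions $(\bm{U}_{(2,1)}\bm{e})*(\bm{U}_{(1,2)}\bm{i})$ and $(\bm{U}_{(2,2)}\bm{i})*(\bm{U}_{(1,1)}\bm{e})$. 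Premultiplying by $\bm{C}$ then yields $H(\bm{e},\bm{i})$ as a sum that includes the linear terms $\bm{C}\bm{U}_{(1,1)}\bm{e}$ and $\bm{C}\bm{U}_{(1,2)}\bm{i}$ together with Hadamard cross-terms whose shape matches exactly the leading summand of \cref{eq:multA}.

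To conclude that these cross-terms constitute genuine multiplicative interactions in the sense of \cref{prop:mi_nerf_multiplicative_module}, I would identify $(\bm{U}_1,\bm{U}_2) \leftarrow (\bm{U}_{(2,1)}, \bm{U}_{(1,2)})$ (and analogously for the other cross-term) and observe that the combined contribution $\bm{C}\bigl[(\bm{U}_{(2,1)}\bm{e})*(\bm{U}_{(1,2)}\bm{i}) + (\bm{U}_{(1,1)}\bm{e})*(\bm{U}_{(2,2)}\bm{i})\bigr]$ is a sum of two rank-$k$ multiplicative modules, each of the form analyzed in \cref{ssec:mi_nerf_app_proof_proposition_multiplicative}. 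Equivalently, by the mixed-product-property argument used there in reverse, this sum can be written as $\bm{\mathcal{W}}\times_2 \bm{e}\times_3 \bm{i}$ for a suitable tensor $\bm{\mathcal{W}}$ obtained by adding the two CP-decomposed tensors, thereby confirming the bilinear interaction structure.

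There is no genuine obstacle here; the proof is essentially a bookkeeping exercise. The only subtlety worth flagging is the appearance of extra second-order self-terms $(\bm{U}_{(2,1)}\bm{e})*(\bm{U}_{(1,1)}\bm{e})$ and $(\bm{U}_{(2,2)}\bm{i})*(\bm{U}_{(1,2)}\bm{i})$, which are absent from the form of \cref{eq:multA}. I would note that these are polynomial interactions within a single variable and therefore do not cancel or interfere with the cross-terms, so the presence of multiplicative $\bm{e}$--$\bm{i}$ interactions is preserved; they simply enrich the function class beyond $M$ without violating the claim. This also sets up the natural generalization to $N>2$ in \cref{prop:mi_nerf_CCP_two-var_module_high-degree}, where the same unrolling produces all monomials of total degree up to $N$ in $(\bm{e},\bm{i})$.
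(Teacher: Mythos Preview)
Your proposal is correct and follows essentially the same approach as the paper: both unroll the recursion for $N=2$, distribute the Hadamard product over the sum $\bm{U}_{(2,1)}\bm{e}+\bm{U}_{(2,2)}\bm{i}$ against $\bm{U}_{(1,1)}\bm{e}+\bm{U}_{(1,2)}\bm{i}$, and identify the resulting cross-terms with the CP-decomposed multiplicative form of \cref{eq:multA}, while noting that the additional same-variable second-degree terms do not obstruct the claim. Your write-up is, if anything, slightly more explicit about why the cross-terms constitute genuine multiplicative interactions (via the reverse mixed-product argument), but the substance is identical.
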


\begin{proof}
    For $N=2$, \cref{eq:multB} becomes:
    \begin{equation}
        \begin{split}
            H(\bm{e}, \bm{i}) = \bm{C}\left\{\left(\bm{U}_{(2, 1)} \bm{e} + \bm{U}_{(2, 2)} \bm{i}\right) * \left(\bm{U}_{(1, 1)} \bm{e} + \bm{U}_{(1, 2)} \bm{i}\right)\right\} + \bm{C}\left(\bm{U}_{(1, 1)} \bm{e} + \bm{U}_{(1, 2)} \bm{i}\right) =\\
            \bm{C} \left\{(\bm{U}_{(2, 1)} \bm{e}) * (\bm{U}_{(1, 1)} \bm{e})\right\} + \bm{C} \left\{(\bm{U}_{(2, 1)} \bm{e}) * (\bm{U}_{(1, 2)} \bm{i})\right\} + \bm{C} \left\{(\bm{U}_{(2, 2)} \bm{i}) * (\bm{U}_{(1, 1)} \bm{e})\right\} + \\
            \bm{C} \left\{(\bm{U}_{(2, 2)} \bm{i}) * (\bm{U}_{(1, 2)} \bm{i})\right\} + \bm{C}\bm{U}_{(1, 1)} \bm{e} + \bm{C}\bm{U}_{(1, 2)} \bm{i}\;.
        \end{split}
        \label{eq:mi_nerf_app_aux_equation_CCP_two-var}
    \end{equation}
    Each of the first terms of the last equation arises from a CP decomposition with specific factor matrices (the inverse process from \cref{eq:mi_nerf_app_aux_equation_second_degree_format} to \cref{eq:mi_nerf_app_aux_equation_CP_format} can be followed). Therefore, \cref{eq:mi_nerf_app_aux_equation_CCP_two-var} captures multiplicative interactions between the two variables, including second-degree interactions among the same variable. 
\end{proof}

\begin{proposition}
    For $N>2$, the model of \cref{eq:multB} captures high-degree interactions between the expression $\bm{e}$ and the identity $\bm{i}$ vector.
    \label{prop:mi_nerf_CCP_two-var_module_high-degree}
\end{proposition}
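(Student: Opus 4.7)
The plan is to directly unroll the recursion in \cref{eq:multB} and then expand the resulting Hadamard products, generalizing the $N=2$ argument of \cref{prop:mi_nerf_CCP_two-var_module_second-degree}. The key observation is that each recursive step can be rewritten as a pure componentwise multiplication: $\bm{x}_n = (\mathbf{1} + \bm{U}_{(n,1)}\bm{e} + \bm{U}_{(n,2)}\bm{i}) * \bm{x}_{n-1}$, where $\mathbf{1}$ is the all-ones vector and $*$ is the Hadamard product. Iterating from $n=2$ up to $n=N$ then yields the closed form $\bm{x}_N = (\bm{U}_{(1,1)}\bm{e} + \bm{U}_{(1,2)}\bm{i}) * \prod_{n=2}^{N}\bigl(\mathbf{1} + \bm{U}_{(n,1)}\bm{e} + \bm{U}_{(n,2)}\bm{i}\bigr)$, with all products interpreted componentwise.

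Next, I would distribute this Hadamard product over all subsets $S \subseteq \{2, \ldots, N\}$, obtaining $\bm{x}_N = \sum_{S} \bm{y}_1 * \prod_{n \in S} \bm{y}_n$ with $\bm{y}_n := \bm{U}_{(n,1)}\bm{e} + \bm{U}_{(n,2)}\bm{i}$. The maximal subset contributes a term of total polynomial degree $N$ in the joint coordinates of $\bm{e}$ and $\bm{i}$, and further distributing each factor of $\bm{y}_n$ produces monomials in $\bm{e}$ alone, in $\bm{i}$ alone, and all possible cross terms of every intermediate degree $d \in \{1, \ldots, N\}$. Applying the CP-decomposition argument of \cref{ssec:mi_nerf_app_proof_proposition_multiplicative} to each $d$-fold Hadamard product $\bm{z}_{j_1} * \cdots * \bm{z}_{j_d}$ with $\bm{z}_{j} \in \{\bm{e}, \bm{i}\}$ then shows that such a term realizes a low-rank parameterization of a general $d$-way multilinear interaction $\bm{\mathcal{W}}^{(d)} \times_2 \bm{z}_{j_1} \times_3 \cdots \times_{d+1} \bm{z}_{j_d}$, and left-multiplying by $\bm{C}$ assembles all these degrees into $H(\bm{e},\bm{i})$.

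The step I expect to be most delicate is the bookkeeping of parameter sharing: each factor matrix $\bm{U}_{(n,\cdot)}$ is reused across every subset $S$ containing $n$, so I cannot freely select independent factor matrices for the CP decomposition of each degree $d$. I would handle this by noting that the proposition only asserts that the module \emph{captures} high-degree interactions rather than spanning them without constraints; it therefore suffices to exhibit at least one representative monomial of each degree $d \leq N$, which falls out immediately from the subset expansion. A cleaner alternative is induction on $N$: assuming $\bm{x}_{N-1}$ already captures interactions up to degree $N-1$, the identity $\bm{x}_N = \bm{x}_{N-1} + \bm{y}_N * \bm{x}_{N-1}$ shows that the previous interactions are preserved and that the new Hadamard products $\bm{y}_N * (\text{degree-}d)$ introduce interactions of every degree up to $N$, completing the inductive step.
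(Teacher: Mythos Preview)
Your proposal is correct and, in fact, more complete than the paper's own argument. Both approaches unroll the recursion and then appeal to the CP-decomposition reasoning of \cref{ssec:mi_nerf_app_proof_proposition_multiplicative} to identify each resulting Hadamard product as a low-rank multilinear interaction. The difference is in scope and bookkeeping: the paper simplifies by (a) dropping the additive skip connection $+\bm{x}_{n-1}$ so that only the top-degree term survives, and (b) only writing out the explicit $N=3$ expansion into its eight triplet terms $(\bm{\tau}_1,\bm{\tau}_2,\bm{\tau}_3)$ with $\bm{\tau}\in\{\bm{e},\bm{i}\}$, leaving the general $N$ implicit. Your rewriting $\bm{x}_n = (\mathbf{1} + \bm{y}_n) * \bm{x}_{n-1}$ is a genuine improvement: it keeps the skip connection, yields a closed form valid for every $N$, and the subset expansion over $S\subseteq\{2,\ldots,N\}$ automatically exhibits interactions of \emph{every} degree $1\le d\le N$ rather than only the top one. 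Your caveat about parameter sharing (the same $\bm{U}_{(n,\cdot)}$ reused across subsets) is also a point the paper leaves unaddressed, and your resolution---that the claim is only that such interactions are \emph{captured}, not freely parameterized---is the right one. The induction variant you sketch at the end is another clean route the paper does not take.
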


\begin{proof}
    The number of terms increase rapidly in \cref{eq:multB}. Without loss of generality, (a) we will only use the multiplicative term (and ignore the additive term of $+\bm{x}_{n - 1}$, since this is not contributing to the higher-degree terms) and (b) we will showcase this for $N=3$. 
    For $N=3$, we obtain the following formula:
    \begin{equation}
        \begin{split}
            H^{\dagger}(\bm{e}, \bm{i}) \approx \bm{C}\left\{\left(\bm{U}_{(2, 1)} \bm{e} + \bm{U}_{(2, 2)} \bm{i}\right) * \left(\bm{U}_{(1, 1)} \bm{e} + \bm{U}_{(1, 2)} \bm{i}\right) * \left(\bm{U}_{(3, 1)} \bm{e} + \bm{U}_{(3, 2)} \bm{i}\right)\right\} = \\
            \bm{C} \left\{(\bm{U}_{(2, 1)} \bm{e}) * (\bm{U}_{(1, 1)} \bm{e}) * (\bm{U}_{(3, 1)} \bm{e}) \right\} + \bm{C} \left\{(\bm{U}_{(2, 1)} \bm{e}) * (\bm{U}_{(1, 2)} \bm{i})* (\bm{U}_{(3, 1)} \bm{e}) \right\} + \\
            \bm{C} \left\{(\bm{U}_{(2, 2)} \bm{i}) * (\bm{U}_{(1, 1)} \bm{e}) * (\bm{U}_{(3, 1)} \bm{e}) \right\} + \bm{C} \left\{(\bm{U}_{(2, 1)} \bm{e}) * (\bm{U}_{(1, 1)} \bm{e}) * (\bm{U}_{(3, 2)} \bm{i}) \right\} + \\
            \bm{C} \left\{(\bm{U}_{(2, 1)} \bm{e}) * (\bm{U}_{(1, 2)} \bm{i}) * (\bm{U}_{(3, 2)} \bm{i}) \right\} + \bm{C} \left\{(\bm{U}_{(2, 2)} \bm{i}) * (\bm{U}_{(1, 1)} \bm{e}) * (\bm{U}_{(3, 2)} \bm{i}) \right\} + \\
            \bm{C} \left\{(\bm{U}_{(2, 2)} \bm{i}) * (\bm{U}_{(1, 2)} \bm{i}) * (\bm{U}_{(3, 1)} \bm{e})\right\} + \bm{C} \left\{(\bm{U}_{(2, 2)} \bm{i}) * (\bm{U}_{(1, 2)} \bm{i}) * (\bm{U}_{(3, 2)} \bm{i})\right\}\;.
        \end{split}
    \end{equation}

    Following similar arguments as the proofs above and the unfolding of the CP decomposition, we can exhibit that all of those terms arise from CP decompositions with specific factor matrices, while each one captures a triplet of the form $(\bm{\tau}_1, \bm{\tau}_2, \bm{\tau}_3)$ with $\bm{\tau} \in \{\bm{e}, \bm{i}\}$. 
\end{proof}

% \begin{figure}[t]
% \begin{center}
%    \includegraphics[width=\linewidth]{images/lpips_acd_spk.png}
% \end{center}
%    \caption{LPIPS and ACD vs number of training identities. We notice an optimum point at 15 identities, producing high-fidelity videos of a target identity, while significantly reducing the total training time.}
% \label{fig:acd_lpips}
% \end{figure}

% \begin{figure}[t]
% \begin{center}
%    \includegraphics[width=\linewidth]{images/obama_johnoliver.png}
% \end{center}
%    \caption{\textbf{Disentangling identity and expression.} Interpolation between different identity codes and facial expressions. We notice the challenging case of interpolating between different skin colors and illumination.}
% \label{fig:dis_ob_j}
% \end{figure}

\section{Additional Ablation Study}\label{sec:2}

In this section, we evaluate other variants of the conditional input of the NeRF (see Table~\ref{tab:ablation_suppl} and Sec.~\ref{sec:ablation}).
% Table~\ref{tab:ablation_suppl} shows the corresponding quantitative results on two identities (similarly with Sec.~\ref{sec:ablation}).

\textbf{(a) Higher Output Dimension.}
A first variant is to increase the output dimension of our multiplicative module $M$, with $\bm{C} \in \mathbb{R}^{o\times k}$ and $o > d$. We tried $o = 256$ that might give a more informative input to our NeRF. However, we found that this leads to a similar performance, while increasing the learnable parameters.

\textbf{(b) Learnable Concatenation.}
As a second variant, we tried to concatenate the information captured from the expression and identity codes: $M(\bm{e}, \bm{i}) = \left[\bm{W}_{2} \bm{e} ; \bm{W}_{3} \bm{i} \right]$. Since this variant does not learn any multiplicative interactions between $\bm{e}$ and $\bm{i}$, it leads to a decrease in performance.

\textbf{(c) Latent Codes in $M$.}
Another possible variant is to include the per-frame latent codes $\bm{l}$ in our multiplicative module. In this way, we would learn multiplicative interactions between all three attributes $\bm{e}, \bm{i}, \bm{l} \in \mathbb{R}^{d}$ as follows:
\begin{equation}
\begin{split}
  M(\bm{e}, \bm{i}, \bm{l}) = 
  \bm{C} [\left(\bm{U}_1 \bm{e}\right) * \left(\bm{U}_2 \bm{i}\right) * \left(\bm{U}_3 \bm{l}\right) \\
  + \left(\bm{U}_1 \bm{e}\right) * \left(\bm{U}_2 \bm{i}\right)
  + \left(\bm{U}_1 \bm{e}\right) * \left(\bm{U}_3 \bm{l}\right) + \left(\bm{U}_2 \bm{i}\right) * \left(\bm{U}_3 \bm{l}\right) \\
  + \left(\bm{U}_1 \bm{e}\right) + \left(\bm{U}_2 \bm{i}\right) + \left(\bm{U}_3 \bm{l}\right) ] \; ,
\end{split}
\end{equation}
where $\bm{U}_{1}, \bm{U}_{2}, \bm{U}_{3} \in \mathbb{R}^{k\times d}$ and $\bm{C }\in \mathbb{R}^{o\times k}$. We set $o=k=d$. In this case, we capture third-order multiplicative interactions as well. The implicit representation $F_{\Theta}$ of the dynamic NeRF (see Eq.~\ref{eq:nerf} of the main paper) would be: 
\begin{equation}
    F_{\Theta}: (M(\bm{e}, \bm{i},  \bm{l}), \bm{x}, \bm{v}) \longrightarrow (\bm{c}, \sigma)
\end{equation}

We found that this variant would decrease the performance, making more difficult for the model to disentangle between identity and non-identity specific information. We believe that this happens because the latent codes $\bm{l}$ capture time-varying information. They memorize small per-frame variations in appearance for each video. These variations are reconstructed in the synthesized videos, in order to enhance the output visual quality. However, there are no meaningful interactions to learn between this time-varying information and the time-invariant identity codes or the facial expressions. 

\begin{figure*}[t]
\begin{center}
   \includegraphics[width=\linewidth]{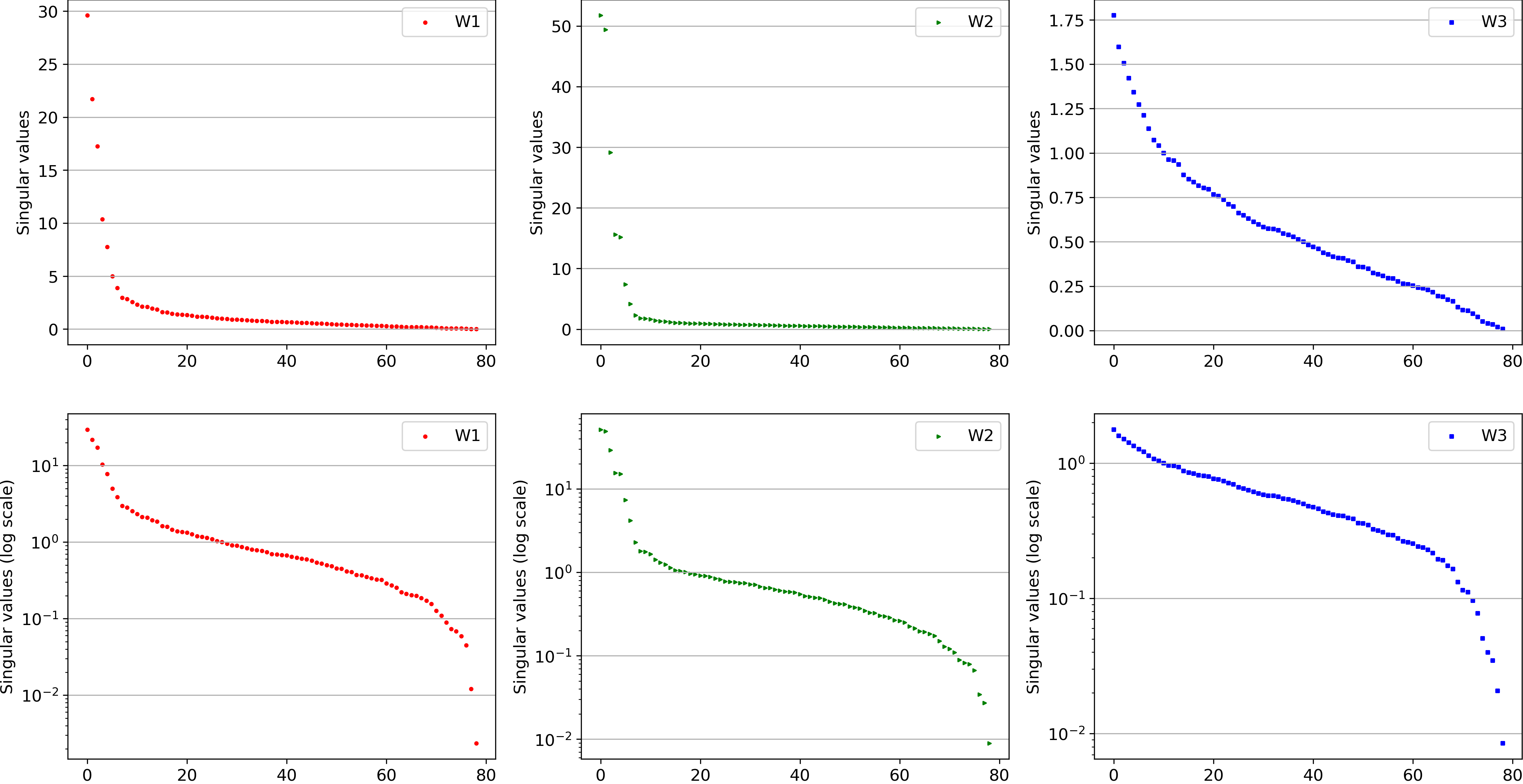}
\end{center}
   \caption{The singular values of our learned $\bm{W}_{1}, \bm{W}_{2}, \bm{W}_{3} \in \mathbb{R}^{d\times d}$ of the (A6) variant of our multiplicative module (see Sec.~\ref{sec:ablation}), with $d = 79$, plotted in linear scale (first row) and logarithmic scale (second row). We found that $\bm{W}_{1}, \bm{W}_{2}, \bm{W}_{3}$ have full rank.}
\label{fig:svd}
\end{figure*}

\begin{table}[t]
\begin{center}
\scalebox{0.85}{
\begin{tabular}{|l|c|c|}
% {|@{\hspace{0.15mm}}l|@{\hspace{0.45mm}}c@{\hspace{0.45mm}}|@{\hspace{0.45mm}}c@{\hspace{0.45mm}}|}
\hline
\textbf{Method} & \textbf{PSNR $\uparrow$} & \textbf{ACD $\downarrow$}\\
\hline\hline
(a) Output Dimension $o = 256$ & 29.76 & 0.16\\
(b) Learnable Concatenation & 28.99 & 0.20\\
(c) Latent Codes in $M$ & 29.01 & 0.17\\
\hline
$M$ (Ours) & 29.73 & 0.16\\
\hline
\end{tabular}}
\end{center}
\caption{\textbf{Ablation Study.} Quantitative results for different variants of our conditional input: (a) $M$ with higher output dimension $o = 256$, (b) learnable concatenation without multiplicative interactions, and (c) latent codes included in the multiplicative module. The proposed multiplicative module $M$ leads to the best disentanglement with the least possible parameters.}
\label{tab:ablation_suppl}
\end{table}

\textbf{Full Rank Matrices. } 
% As described in Sec.~\ref{sec:ablation} of the main paper,
For the (A6) variant of our multiplicative module (see Sec.~\ref{sec:ablation}), we found that the learned $\bm{W}_{1}, \bm{W}_{2}, \bm{W}_{3}  \in \mathbb{R}^{d\times d}$ with $d = 79$ have full rank. Fig.~\ref{fig:svd} plots the 79 singular values of $\bm{W}_{1}, \bm{W}_{2}, \bm{W}_{3}$ from our model trained on 10 identities (applying SVD from numpy.linalg~\footnote{\url{https://numpy.org/doc/stable/reference/generated/numpy.linalg.svd.html}}). In the second row, we plot them in logarithmic scale. All of them have 79 singular values greater than $10^{-3}$, and until the 78th are far from zero, leading to matrices of full rank. This indicates that the dimension $d = 79$ is necessary to capture all the information. We hypothesize that this might be due to the fact that the expression parameters $\bm{e}$ are extracted from a 3DMM, after applying PCA on large human face datasets and keeping the most important principal components. This would be an interesting avenue to explore for future work.

% matrix 256x79 vs 79x79

% A8: concat with learnable weights

% latent vectors in multiplication - explain

\begin{figure*}[t]
\begin{center}
   \includegraphics[width=\linewidth]{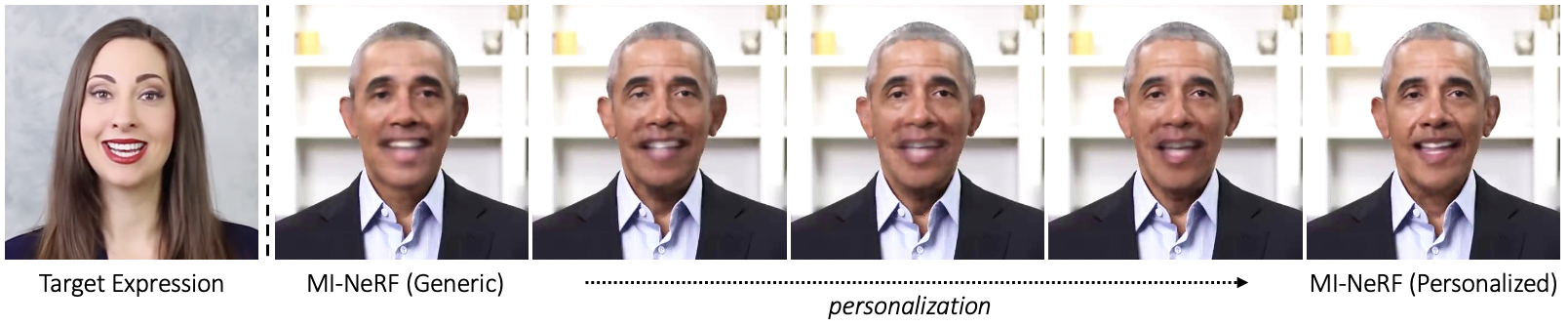}
\end{center}
% \vspace{-10pt}
   \caption{Progressive \textbf{personalization} from our generic \MethodName trained on 100 identities to the personalized model for a target identity.}
\label{fig:personal}
% \vspace{-10pt}
\end{figure*}

\begin{figure*}[t]
\begin{center}
   \includegraphics[width=\linewidth]{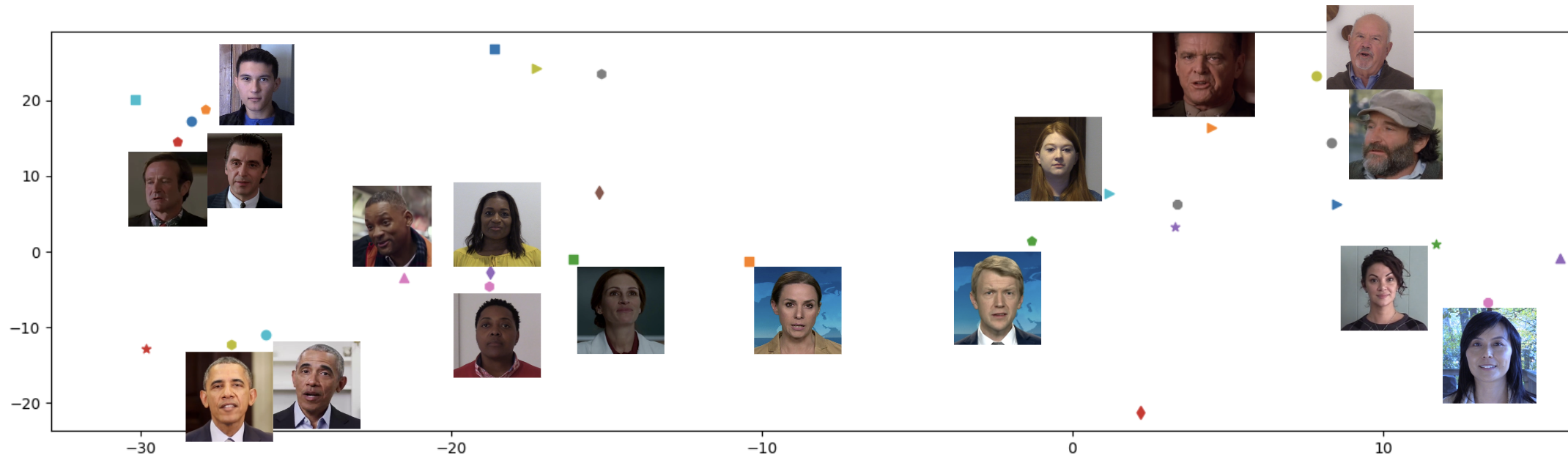}
\end{center}
   \caption{A visualization of our learned identity codes using t-SNE~\citep{van2008visualizing}. We notice that the identity codes capture meaningful information in terms of identity, e.g. different videos of Obama (bottom left) are clustered together, despite the different lighting.}
\label{fig:tsne}
\end{figure*}

\begin{figure}[t]
\begin{center}
   \includegraphics[width=\linewidth]{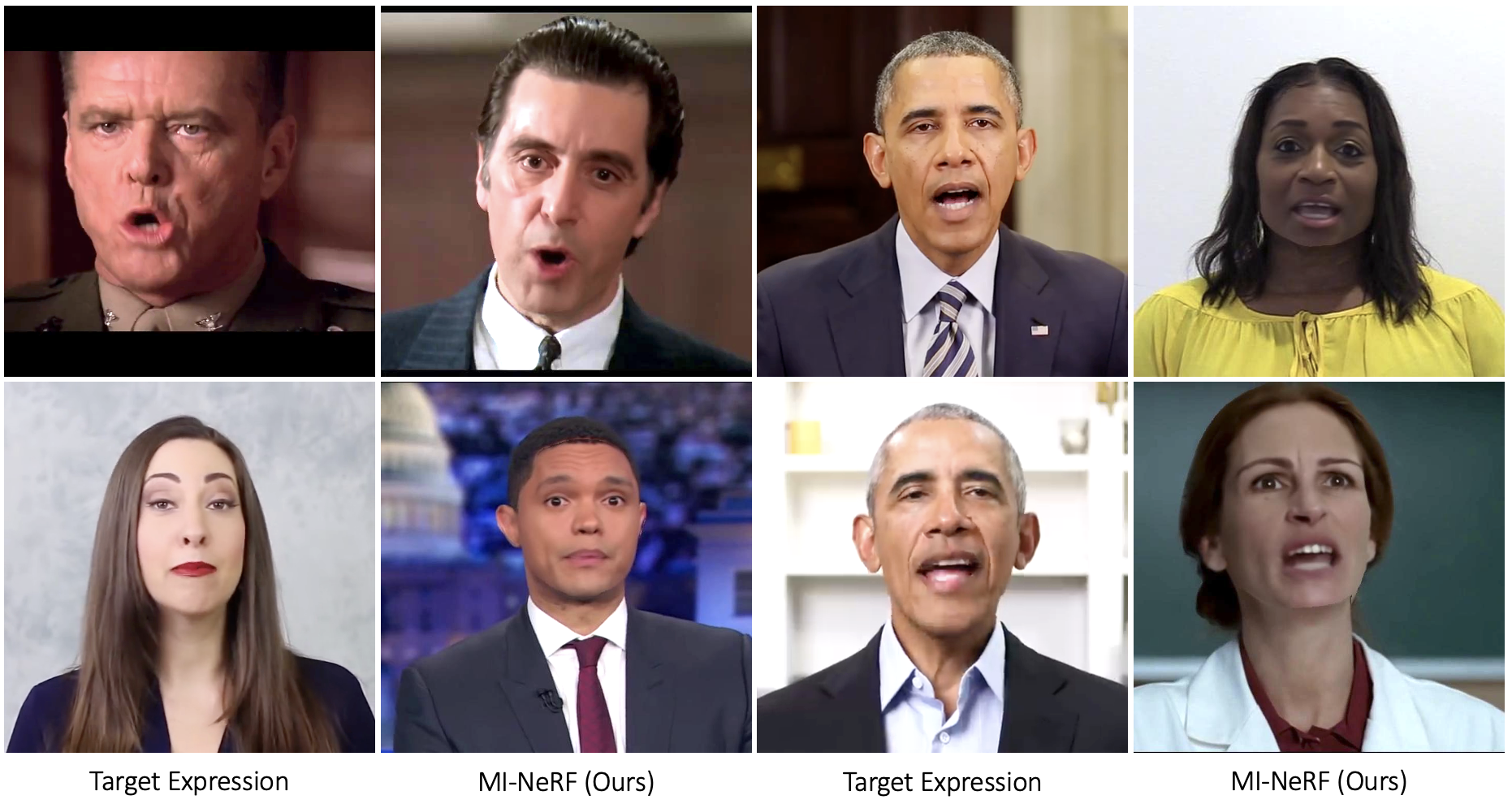}
\end{center}
% \vspace{-10pt}
   \caption{Facial expression transfer for various identities by our proposed \MethodName. The target expressions are unseen (novel) for each target identity.}
\label{fig:exp_suppl}
% \vspace{-10pt}
\end{figure}

\begin{figure}[t]
\begin{center}
   \includegraphics[width=\linewidth]{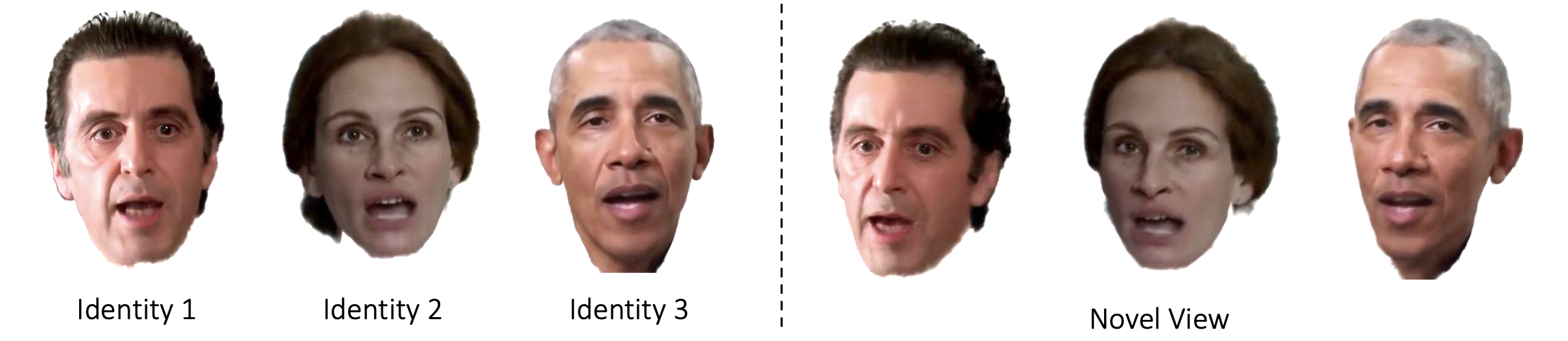}
\end{center}
% \vspace{-10pt}
   \caption{\MethodName can animate multiple identities under the same novel expression (left) and under the same novel view (right).}
% \vspace{-10pt}
\label{fig:heads}
\end{figure}

\section{Additional Results}\label{sec:3}

In this section, we include additional qualitative and quantitative results, in order to further evaluate our method.

\textbf{Personalization.} Fig.~\ref{fig:personal} shows the progressive improvement of the visual quality during our personalization procedure (see \cref{sec:personalization}). Compared to the generic \MethodName, the personalized model better captures the high-frequency facial details, such as wrinkles.

\textbf{Learned Identity Codes.} Fig.~\ref{fig:tsne} shows a visualization of our learned identity codes using t-SNE~\citep{van2008visualizing} on our final model with 100 identities. We notice that the identity codes capture meaningful information in terms of identity.

\textbf{Learned Latent Codes.} Fig.~\ref{fig:latent_diff} shows a qualitative comparison between our model trained without and with latent codes $\bm{l}$ for the same expression as input. As mentioned in the ablation study in \cref{sec:ablation}, these latent codes capture very small variations in appearance and high-frequency details.

\begin{figure*}[t]
\begin{center}
   \includegraphics[width=\linewidth]{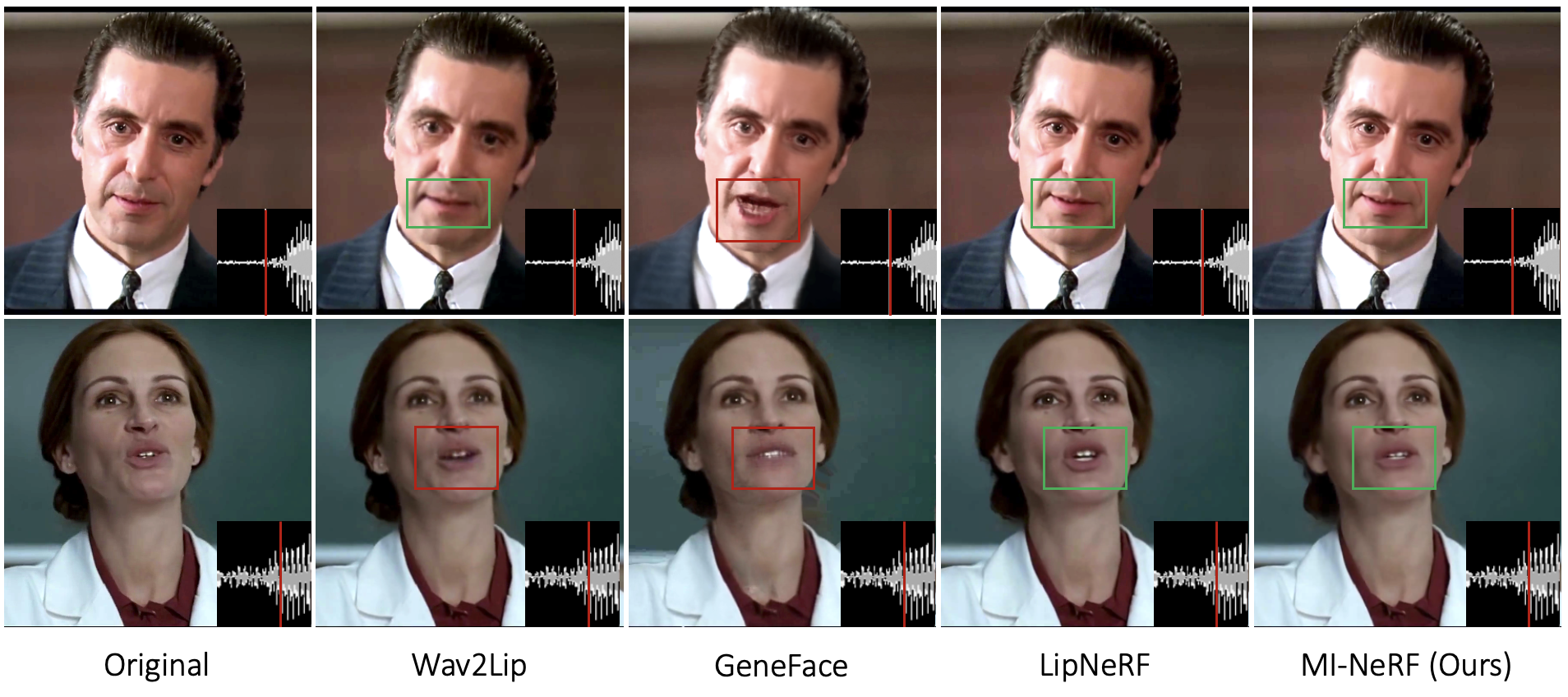}
\end{center}
   \caption{Qualitative comparison of lip synced video synthesis by Wav2Lip~\citep{wav2lip}, GeneFace~\citep{ye2023geneface}, LipNeRF~\citep{lipnerf}, and \MethodName. The original video is in English (1st column). The generated videos (columns 2-5) are lip synced to dubbed audio in Spanish.}
\label{fig:geneface}
\end{figure*}

\begin{figure*}[t]
\begin{center}
   \includegraphics[width=0.8\linewidth]{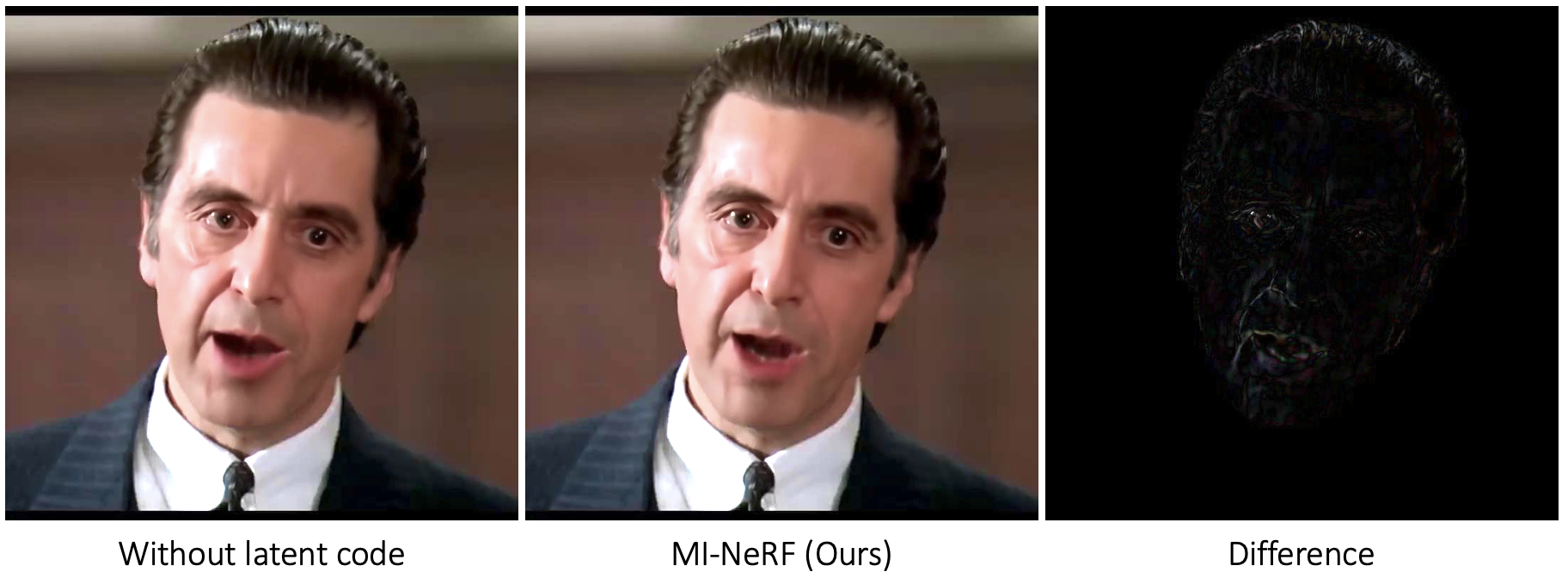}
\end{center}
   \caption{Ablation study on our learned latent codes. From left to right: result of \MethodName with $M$ without latent codes $\bm{l}$, result of \MethodName (Ours), and their difference.}
\label{fig:latent_diff}
% \vspace{-10pt}
\end{figure*}

\begin{figure*}[t]
\begin{center}
\subfloat{\includegraphics[width=0.49\linewidth]{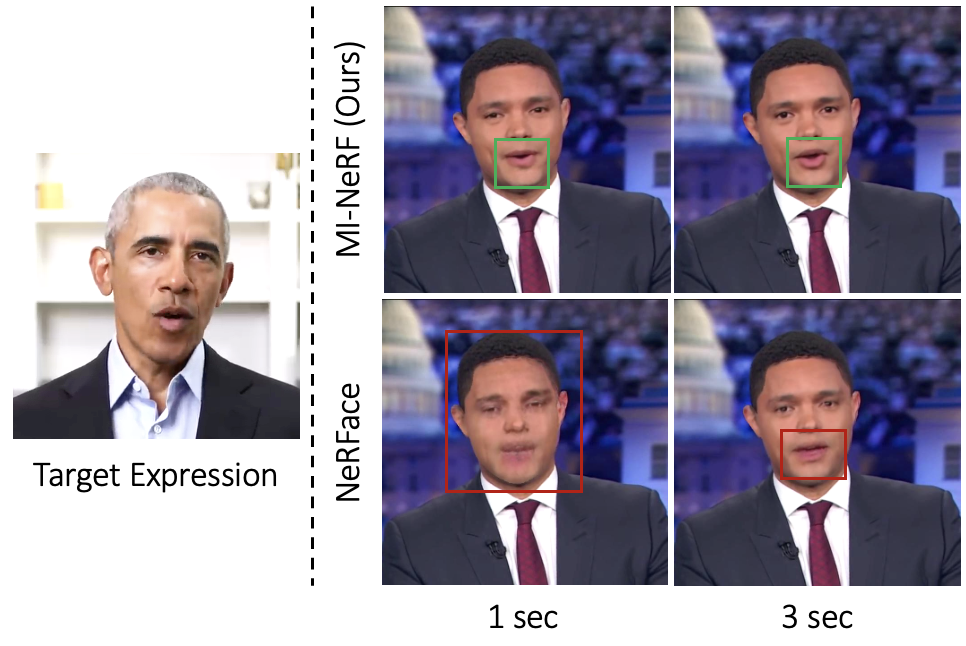}}
\subfloat{\includegraphics[width=0.49\linewidth]{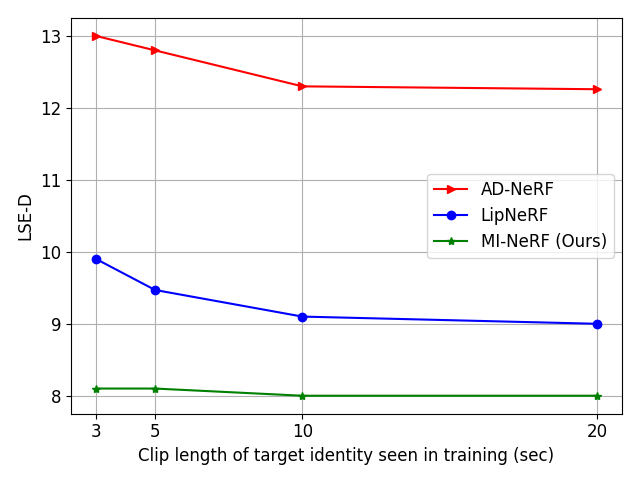}}
\end{center}
% \vspace{-10pt}
   \caption{\textbf{Short-Video Personalization.} \textbf{Left:} Qualitative comparison of \MethodName with the single-identity NeRFace~\citep{nerface}, using a video of the target identity of only 1 or 3 seconds duration for adaptation. \textbf{Right:} LSE-D (lower the better) vs clip length of the target identity seen in training, computed on generated videos lip synced to dubbed audio.}
\label{fig:short_supp}
% \vspace{-10pt}
\end{figure*}

\noindent
\textbf{Additional Qualitative Results.} 
Fig.~\ref{fig:exp_suppl} demonstrates additional qualitative results of facial expression transfer generated by \MethodName. \Cref{fig:heads} demonstrates how \MethodName can animate multiple identities under the same novel expression and novel view.
Fig.~\ref{fig:geneface} shows additional comparison with GeneFace~\citep{ye2023geneface} for lip synced video synthesis. GeneFace lacks in lip synchronization as also mentioned in \cref{sec:exp_lipsync}. The GAN-based Wav2Lip~\citep{wav2lip} sometimes produces artifacts (e.g. see the mouth in the 2nd row). Our method achieves accurate lip syncing, while trained on multiple identities.

Fig.~\ref{fig:short_supp} shows additional results for our short-video personalization procedure, where \MethodName is adapted to an unseen identity (i.e. not seen as a part of the initial training set - see \cref{sec:personalization}). \Cref{fig:short_supp} (left) demonstrates qualitative results, when only a very short video of the new identity is available (1 or 3 seconds length). Please note that in this case we use a small number of \textit{consecutive} frames, and thus a very small part of the expression space is covered. 
In contrast to NeRFace, \MethodName produces satisfactory lip shape and expression, as the multiplicative module has learned information from multiple identities during training. Correspondingly, \Cref{fig:short_supp} (right) shows the LSE-D metric w.r.t different clip lengths of the target identity for the task of lip-syncing. Since AD-NeRF and LipNeRF are trained on a single identity, they learn audio-lip representations from only 3 or 20 seconds. On the other hand, \MethodName leverages information from multiple identities and leads to a more accurate lip synchronization.

\noindent
\textbf{Video Results.} We strongly encourage the readers to watch our supplementary video that includes results for facial expression transfer and lip synced video synthesis.

\section{Comparison with Faster Methods}\label{sec:7}

We developed our multi-identity network and the proposed multiplicative modules based on the vanilla dynamic NeRF. We built upon dynamic NeRFs such as NeRFace~\citep{nerface}, AD-NeRF~\citep{adnerf} and LipNeRF~\citep{lipnerf}. However, we would like to note that
recent methods propose techniques to reduce the training and rendering time of vanilla NeRFs. For example, INSTA~\citep{zielonka2023insta} models the dynamic NeRF based on neural graphics primitives embedded around the parametric face model FLAME~\citep{FLAME:SiggraphAsia2017}, requiring only 10 minutes training for an identity.

Our multi-identity network can be easily extended to such faster approaches. Our key idea is to learn multiplicative interactions between facial expressions and identity codes. Thus, any network that is conditioned on 3DMM expression parameters can also be extended to multiple identities by using our proposed multiplicative module.

We specifically tried INSTA~\citep{zielonka2023insta} and extended it to multiple identities. The original INSTA is a single-identity deformable NeRF, where an expression code $\bm{E}_i \in \realnum^{16}$ for an image $i$ conditions an MLP~\citep{zielonka2023insta}. $\bm{E}_i$ is used for the input samples that correspond to the mouth region, and turned to $\bm{E}_i = \bm{1}$ otherwise. As in \cref{eq:multA} of the main paper, we learned a multiplicative module $M(\bm{E}_i, \bm{i})$ that conditions INSTA, with learnable identity codes $\bm{i} \in \realnum^{d}$ and $d = 16$ in this case. We trained our multi-identity INSTA on our training set of 100 identities. We used their data pre-processing and FLAME fitting~\footnote{\url{https://github.com/Zielon/INSTA}}. 

\Cref{fig:insta_multi} demonstrates the improvement when transferring a novel expression for INSTA trained on multiple identities, using our multiplicative module. However, we notice that INSTA produces some artifacts in the mouth interior (also mentioned by the authors~\citep{zielonka2023insta}) that are not completely removed with our multi-identity optimization, and do not exist in our proposed \MethodName. With our multi-identity training, we similarly achieve a \emph{90\% decrease in the training time} for 100 identities (see \cref{sec:exp_faceexptransfer}), compared to the single-identity INSTA. Specifically, INSTA requires about 10 minutes training for a single video, thus 1000 minutes total for 100 identities, and we train a multi-identity INSTA in less than 90 minutes for all 100 identities simultaneously.

\begin{figure}[t]
\begin{center}
   \includegraphics[width=0.8\linewidth]{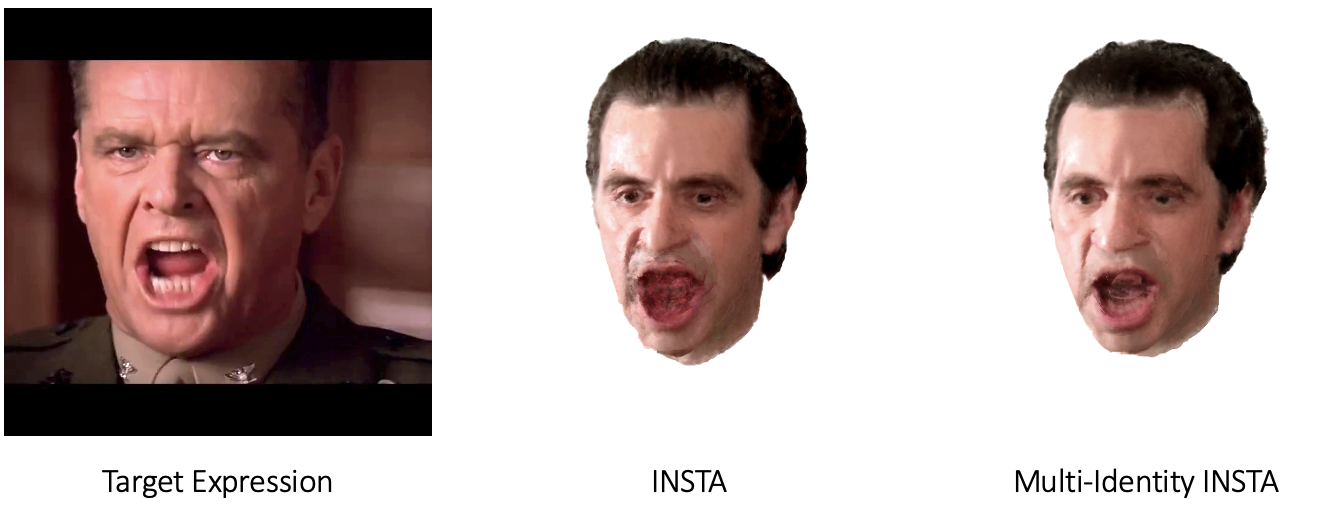}
\end{center}
% \vspace{-10pt}
   \caption{Extending other NeRF-based approaches to multiple identities. The single-identity INSTA~\citep{zielonka2023insta} can be easily extended to a multi-identity INSTA by learning our proposed multiplicative module.}
\label{fig:insta_multi}
% \vspace{-10pt}
\end{figure}

% \begin{figure*}[t]
% \begin{center}
%    \includegraphics[width=\linewidth]{images/exp_suppl.png}
% \end{center}
%    \caption{Facial expression transfer for various identities by our proposed \MethodName.}
% \label{fig:exp_suppl}
% \end{figure*}

% \begin{figure*}[t]
% \begin{center}
% \subfloat{\includegraphics[width=0.49\linewidth]{images/short.png}}
% \subfloat{\includegraphics[width=0.49\linewidth]{images/LSE-D_plot_length.png}}
%    % \includegraphics[width=\linewidth]{images/short.png}
% \end{center}
% % \vspace{-10pt}
%    \caption{\textbf{Short-Video Personalization.} \textbf{Left:} Qualitative comparison of \MethodName with the single-identity NeRFace~\citep{nerface}, using a video of the target identity of only 1 or 3 seconds duration for adaptation. \textbf{Right:} LSE-D (lower the better) vs clip length of the target identity seen in training, computed on generated videos lip synced to dubbed audio.}
% \label{fig:short}
% % \vspace{-10pt}
% \end{figure*}

\section{Implementation Details}\label{sec:4}

In this section, we include additional implementation details. We closely follow the architecture and training details of NeRFace~\citep{nerface}, AD-NeRF~\citep{adnerf}, and LipNeRF~\citep{lipnerf} that are all similar. An overview of the main hyper-parameters is given in Table~\ref{tab:hyper}. More specifically, our implementation is based on PyTorch~\citep{paszke2019pytorch}. We use Adam optimizer~\citep{kingma2014adam} with a learning rate that begins at $5\times10^{-4}$ and decays exponentially to $5\times10^{-5}$ during training. The rest of the Adam hyper-parameters are set at their default values ($\beta_1 = 0.9, \beta_2 = 0.999, \epsilon = 10^{-8}$). For every gradient step, we march rays and sample points for a single randomly-chosen video frame. We march 2048 rays and sample 64 points per ray for the coarse volume and 128 points per ray for the fine volume (hierarchical sampling strategy~\citep{mildenhall2020nerf}). We sample rays such that $95\%$ of them correspond to pixels inside the detected bounding box of the head~\citep{nerface}. We also assume that the last sample on each ray lies on the background and takes the corresponding RGB color~\citep{adnerf, lipnerf}. We use the original background per frame. Our MLP backbone consists of 8 linear layers with 256 hidden units each. The output of the backbone is fed to an additional linear layer to predict the density $\sigma$, and a 4-layer 128-unit wide branch to predict the RGB color $\bm{c}$ for every point. We use ReLU activations. Positional encodings are applied
to both the input points $\bm{x}$ and the viewing directions $\bm{v}$, of 10 and 4 frequencies respectively. We do not apply smoothing on the expression parameters using a low pass filter, as proposed by~\cite{lipnerf}. Instead, we learn a self-attention, similarly to~\cite{adnerf}, that is applied to both the output of the multiplicative module and the latent codes after the first 200k iterations during training. This ensures smooth results in the final video synthesis, reducing any jitter. For 1-2 identities, we train our network for about 400k iterations (around 40 hours on a single GPU). For 10 or 20 identities, we need around 500k and 600k iterations correspondingly, and our generic \MethodName with 100 identities takes about 800k iterations (see Fig.~7 of the main paper). Further personalization requires another 50-80k iterations approximately for a target identity.

To compute the visual quality metrics (PSNR, SSIM, LPIPS), we crop each frame around the face, using the face detector from 3DDFA~\citep{guo2020towards,3ddfa_cleardusk}. In this way, we evaluate the visual quality of the generated part only, ignoring the background that corresponds to the original one. To verify the speaker's identity, we use the ACD metric, computing the cosine distance between the embeddings of the ground truth face and the generated face, extracted by InsightFace~\footnote{\url{https://github.com/deepinsight/insightface}}.

\begin{table}[t]
\begin{center}
\scalebox{0.85}{
\begin{tabular}{|l|c|}
% {|@{\hspace{0.15mm}}l|@{\hspace{0.45mm}}c@{\hspace{0.45mm}}|@{\hspace{0.45mm}}c@{\hspace{0.45mm}}|}
\hline
Optimizer & Adam \\
Initial learning rate & $5\times10^{-4}$\\
Final learning rate & $5\times10^{-5}$\\
Learning rate schedule & exponential decay \\
Batch size (number of rays) & 2048\\
Samples for the coarse network (per ray) & 64 \\
Samples for the fine network (per ray) & 128 \\
Linear layers (MLP backbone) & 8\\
Hidden units (MLP backbone) & 256 \\
Activation & ReLU \\
Frequencies for $\bm{x}$ (positional encoding) & 10 \\
Frequencies for $\bm{v}$ (positional encoding) & 4 \\

\hline
\end{tabular}}
\end{center}
\caption{\textbf{Experimental setting}. Hyper-parameters used for training our proposed \MethodName (used for both facial expression transfer and talking face video synthesis).}
\label{tab:hyper}
\end{table}

% We use the learned 3DMM from~\citep{guo2018cnn}, which uses the Basel Face Model~\citep{paysan20093d} for shape and texture and the FaceWarehouse~\citep{cao2013facewarehouse} for expression. 

% architecture - MLP

% training time/details

% background - last sample on ray

% arcface for ACD

% identity codes + body pose in HumanNeRF

\section{Limitations}\label{sec:limit}

An important factor is the 3DMM fitting (see \cref{sec:input} of the main paper) that is used as ground truth for the head pose and expression parameters. We use an existing optimization method~\citep{adnerf}. However, this fitting can be noisy and the error can be propagated to the final generated videos. We address this by learning a self-attention, as mentioned in the implementation details above. Improving the face tracking further would be interesting to explore as future work. In addition, our network considers identity and expression, but does not learn interactions between other latent factors of variation, such as illumination or hair movement. Our model seems robust to different illuminations for the same identity (see \cref{fig:tsne}). In the future, we plan to disentangle more latent factors of variation by extending our proposed high-degree interaction module $H$ to capture higher-degree interactions (see \cref{sec:M_and_H}).

\section{Ethical Considerations}\label{sec:5}

We would like to note the potential misuse of video synthesis methods. With the advances in neural rendering and generative models, it becomes easier to generate photorealistic
fake videos of any identity. These can be used for malicious
purposes, e.g. to generate misleading content and spread misinformation. Thus, it is important to develop accurate methods for fake content detection and forensics. Research on discriminative tasks has been investigated for several years by the community and there are certain guarantees and knowledge on how to build strong classifiers. However, this is only the first step towards mitigating the issue of fake content; further steps are required. A possible solution, that can be easily integrated in our work, is watermarking the generated videos~\citep{chen2023marknerf}, in order to indicate their origin. In addition, appropriate
procedures must be followed to ensure fair and safe use of
videos from a social and legal perspective.

% Also, additional work is required in the social and legal perspective for limiting the future usage of fake content.

\section{Dataset Details}\label{sec:6}

In this section, we provide additional details for the dataset we used in our experiments. As mentioned in \cref{sec:dataset}, we collected 140 talking face videos from publicly available datasets, which are commonly used in related works~\citep{adnerf,lu2021live,lipnerf,hazirbas2021towards,ginosar2019learning,ahuja2020style,Duarte_CVPR2021,hdtf,wang2021facevid2vid}. The detailed list of the 140 videos is as follows:

\begin{itemize}
    \item Standard videos used in related works, collected by~\citep{lu2021live} and~\citep{adnerf}:
    \begin{itemize}
        \item Obama1 (Barack Obama)
        \item Obama2 (Barack Obama)
        \item Markus Preiss
        \item Natalie Amiri
    \end{itemize}
    \item PATS dataset~\citep{ginosar2019learning,ahuja2020style}:
    \begin{itemize}
        \item Trevor Noah
\item John Oliver
\item Samantha Bee
\item Charlie Houpert
\item Vanessa Van Edwards
    \end{itemize}
    \item Actors from dataset proposed by LipNeRF~\citep{lipnerf}:
    \begin{itemize}
        \item Al Pacino
\item Jack Nicholson
\item Julia Roberts
\item Robin Williams
\item Tom Hanks
\item Morgan Freeman
\item Tim Robbins
\item Will Smith
    \end{itemize}
    \item TalkingHead-1KH dataset~\citep{wang2021facevid2vid}:
    \begin{itemize}
    \item 1lSejjfNHpw\_0075\_S0\_E1456\_L671\_T47\_R1471\_B847
\item 2Xu56MEC91w\_0046\_S80\_E1105\_L586\_T86\_R1314\_B814
\item 3y6Vjr45I34\_0004\_S287\_E1254\_L568\_T0\_R1464\_B896
\item 4hQi42Q9mcY\_0002\_S0\_E1209\_L443\_T0\_R1515\_B992
\item 5crEV5DbRyc\_0009\_S208\_E1152\_L1058\_T102\_R1712\_B756
\item -7TMJtnhiPM\_0000\_S1202\_E1607\_L345\_T26\_R857\_B538
\item -7TMJtnhiPM\_0000\_S1608\_E1674\_L467\_T52\_R851\_B436
\item 85UEFVcmIjI\_0014\_S92\_E1162\_L558\_T134\_R1294\_B870
\item A2800grpOzU\_0002\_S812\_E1407\_L227\_T7\_R1139\_B919
\item c1DRo3tPDG4\_0010\_S0\_E1730\_L432\_T33\_R1264\_B865
\item EGGsK7po68c\_0007\_S0\_E1024\_L786\_T50\_R1598\_B862
\item eKFlMKp9Gs0\_0005\_S0\_E1024\_L705\_T118\_R1249\_B662
\item EWKJprUrnPE\_0005\_S0\_E1024\_L84\_T168\_R702\_B786
\item gp4fg9PWuhM\_0003\_S0\_E858\_L526\_T0\_R1310\_B768
\item HBlkinewdHM\_0000\_S319\_E1344\_L807\_T149\_R1347\_B689
\item jpCrKYWjYD8\_0002\_S0\_E1535\_L527\_T68\_R1215\_B756
\item jxi\_Cjc8T1w\_0061\_S0\_E1024\_L660\_T102\_R1286\_B728
\item kMXhWN71Ar0\_0001\_S0\_E1311\_L60\_T0\_R940\_B832
\item m2ZmZflLryo\_0009\_S0\_E1024\_L678\_T51\_R1390\_B763
\item NXpWIephX1o\_0031\_S0\_E1264\_L357\_T0\_R1493\_B1072
\item PAaWZTFRP9Q\_0001\_S0\_E672\_L624\_T42\_R1376\_B794
\item PAaWZTFRP9Q\_0001\_S926\_E1425\_L696\_T101\_R1464\_B869
\item SmtJ5Cy4jCM\_0006\_S0\_E523\_L524\_T50\_R1388\_B914
\item SmtJ5Cy4jCM\_0006\_S546\_E1134\_L477\_T42\_R1357\_B922
\item SU8NSkuBkb0\_0015\_S826\_E1397\_L347\_T69\_R1099\_B821
\item VkKnOEQlwl4\_0010\_S98\_E1537\_L821\_T22\_R1733\_B934
\item --Y9imYnfBw\_0000\_S0\_E271\_L504\_T63\_R792\_B351
\item --Y9imYnfBw\_0000\_S1015\_E1107\_L488\_T23\_R824\_B359
\item YsrzvkG5\_KI\_0018\_S36\_E1061\_L591\_T100\_R1055\_B564
\item Zel-zag38mQ\_0001\_S0\_E1466\_L591\_T12\_R1439\_B860
    \end{itemize}
\item Casual Conversations dataset~\citep{hazirbas2021towards}:
    \begin{itemize}
       \item 1224\_09
\item 1226\_00
\item 1229\_08
\item 1230\_09
\item 1232\_00
\item 1233\_09
\item 1234\_11
\item 1235\_09
\item 1247\_00
\item 1249\_14
\item 1250\_09
\item 1253\_00
\item 1269\_11
\item 1281\_06
\item 1281\_13
\item 1282\_10
\item 1290\_07
\item 1290\_13
\item 1301\_11
\item 1323\_09
\item 1328\_14 
    \end{itemize}
\item How2Sign dataset~\citep{Duarte_CVPR2021}:
\begin{itemize}
    \item 0zvsqf23tmw\_3-2-rgb\_front
\item 2ri5HYm48MA\_5-2-rgb\_front
\item 4I2azcR2kcA-8-rgb\_front
\item 5Uy3r6Sl4pM-8-rgb\_front
\item 5z\_z6opEIH0-3-rgb\_front
\item -96cWDhR4hc-5-rgb\_front
\item a1HVL0zE768\_2-3-rgb\_front
\item a4Nxq0QV\_WA\_5-5-rgb\_front
\item bIUmw2DVW7Q\_11-3-rgb\_front
\item dlXnxaYWr9w-1-rgb\_front
\end{itemize}
\item HDTF dataset~\citep{hdtf}:
\begin{itemize}
    \item RD\_Radio10\_000
\item RD\_Radio1\_000
\item RD\_Radio11\_000
\item RD\_Radio11\_001
\item RD\_Radio12\_000
\item RD\_Radio13\_000
\item RD\_Radio14\_000
\item RD\_Radio16\_000
\item RD\_Radio17\_000
\item RD\_Radio18\_000
\item RD\_Radio19\_000
\item RD\_Radio20\_000
\item RD\_Radio2\_000
\item RD\_Radio21\_000
\item RD\_Radio22\_000
\item RD\_Radio23\_000
\item RD\_Radio25\_000
\item RD\_Radio26\_000
\item RD\_Radio27\_000
\item RD\_Radio28\_000
\item RD\_Radio29\_000
\item RD\_Radio30\_000
\item RD\_Radio3\_000
\item RD\_Radio31\_000
\item RD\_Radio32\_000
\item RD\_Radio33\_000
\item RD\_Radio34\_000
\item RD\_Radio34\_001
\item RD\_Radio34\_002
\item RD\_Radio34\_003
\item RD\_Radio34\_004
\item RD\_Radio34\_005
\item RD\_Radio34\_006
\item RD\_Radio34\_007
\item RD\_Radio34\_009
\item RD\_Radio35\_000
\item RD\_Radio36\_000
\item RD\_Radio37\_000
\item RD\_Radio38\_000
\item RD\_Radio39\_000
\item RD\_Radio40\_000
\item RD\_Radio4\_000
\item RD\_Radio41\_000
\item RD\_Radio42\_000
\item RD\_Radio43\_000
\item RD\_Radio44\_000
\item RD\_Radio45\_000
\item RD\_Radio46\_000
\item RD\_Radio47\_000
\item RD\_Radio48\_000
\item RD\_Radio49\_000
\item RD\_Radio50\_000
\item RD\_Radio5\_000
\item RD\_Radio51\_000
\item RD\_Radio52\_000
\item RD\_Radio53\_000
\item RD\_Radio54\_000
\item RD\_Radio57\_000
\item RD\_Radio59\_000
\item RD\_Radio7\_000
\item RD\_Radio8\_000
\item RD\_Radio9\_000
\end{itemize}
\end{itemize}

Please note that most of these videos are from YouTube and the identities are public figures (e.g. politicians in HDTF dataset~\citep{hdtf}, famous actors in LipNeRF~\citep{lipnerf}, comedians and professional YouTubers in PATS dataset~\citep{ginosar2019learning,ahuja2020style}~\footnote{\url{https://chahuja.com/pats/}}). The Casual Conversations dataset~\citep{hazirbas2021towards} includes talking videos of paid individuals who agreed to participate and opted-in for data use in ML. The participants are de-identified with unique numbers~\footnote{\url{https://ai.meta.com/datasets/casual-conversations-dataset/}}. The How2Sign dataset~\citep{Duarte_CVPR2021} is another dataset publicly available for research purposes that includes American Sign Language videos~\footnote{\url{https://how2sign.github.io/}}. TalkingHead-1kH~\citep{wang2021facevid2vid} also includes videos from YouTube under permissive licenses only~\footnote{\url{https://github.com/tcwang0509/TalkingHead-1KH}}.

In \cref{sec:dataset}, we mention that we use 100 videos for training our multi-identity model. These correspond to 100 different identities, where we use both Obama1 and Obama2 (see list above) that are different videos of Obama (see also \cref{fig:tsne}). Thus, more specifically, we use 101 videos in total for training and we learn 101 identity codes. We mention ``100 identities'' for simplicity purposes.

 For each video, we fit a 3DMM, as described in \cref{sec:input}, in order to extract the corresponding pose and expression parameters of the identity per frame. We refer the interested reader to the work of~\citep{adnerf}~\footnote{\url{https://github.com/YudongGuo/AD-NeRF}} for more details in the data preprocessing.

\end{document}